\documentclass{article}
\usepackage[margin=0.85in]{geometry}
\usepackage{amsmath, amssymb, amsthm}
\usepackage{setspace}
\usepackage{algorithm}
\usepackage{algpseudocode}
\usepackage{enumitem}
\usepackage{graphicx}
\usepackage{tikz}
\usetikzlibrary{decorations.pathreplacing, arrows.meta, calc}

\usepackage{cite}

\usepackage{amsfonts}
\usepackage{times}
\usepackage{latexsym}
\usepackage{amssymb}
\usepackage{amsmath}
\usepackage{verbatim}
\def\bb0{{\mathbb{0}}}

\def\bb{{\mathbf{b}}}

\def\b0{{\mathbf{0}}}
\def\opt{\mathsf{OPT}}

\def\b1{{\mathbf{1}}}

\def\bbR{{\mathbb{R}}}

\def\cA{\mathcal{A}}

\def\cG{\mathcal{G}}

\def\cX{\mathcal{X}}

\def\sf0{{\mathsf{0}}}

\newtheorem{theorem}{Theorem}

\newtheorem{lemma}{Lemma}

\usepackage{tcolorbox}
\tcbuselibrary{theorems,skins,breakable}

\newtcolorbox{intuitionbox}[2][]{
  colback=blue!5!white,
  colframe=blue!60!black,
  fonttitle=\bfseries,
  title=Intuition: #2,
  #1,
  breakable
}

\newtcolorbox[
  auto counter,
  number within=section
]{Remarkbox}[2][]{
  colback=green!5!white,
  colframe=green!50!black,
  fonttitle=\bfseries,
  title=Remark~\thetcbcounter: #2,
  #1,
  breakable
}

\newtcolorbox[
  auto counter,
  number within=section
]{asmpbox}[2][]{
  colback=orange!8!white,
  colframe=orange!70!black,
  fonttitle=\bfseries,
  title=Assumption~\thetcbcounter: #2,
  #1,
  breakable
}

\newtcolorbox[
  auto counter,
  number within=section
]{propertybox}[2][]{
  colback=orange!8!white,
  colframe=orange!70!black,
  fonttitle=\bfseries,
  title=Property~\thetcbcounter: #2,
  #1,
  breakable
}

\newtcolorbox[
  auto counter,
  number within=section
]{defbox}[2][]{
  colback=gray!6!white,
  colframe=gray!50!black,
  fonttitle=\bfseries,
  title=Definition~\thetcbcounter: #2,
  #1,
  breakable
}

\newtcolorbox[
  auto counter,
  number within=section
]{examplebox}[2][]{
  colback=purple!5!white,
  colframe=purple!50!black,
  fonttitle=\bfseries,
  title=Example~\thetcbcounter: #2,
  #1,
  breakable
}

\begin{document}
\title{Convex Optimization with Nested Evolving Feasible Sets (CONES) under Time-Varying Loss Functions}
\author{Rahul Vaze
}
\maketitle              
\begin{abstract}
\emph{Convex Optimization with Nested Evolving Feasible Sets (CONES)} was introduced in \cite{CONESVaze} where the objective function \(f\) remains fixed but the feasible region evolves over time as a nested sequence \(S_1 \supseteq S_2 \supseteq \cdots \supseteq S_T\). The goal of an online algorithm is to simultaneously minimize the regret with respect to hindsight static optimal benchmark and the total movement cost $M_\cA(T)$ while ensuring feasibility at all times.
CONES is an optimization-oriented generalization of the well-known \emph{nested convex body chasing} (NCBC). In this paper, we extend CONES to allow for loss functions $f_t'$s to also change over time. 
When all loss functions are convex, we show that the projected proximal algorithm achieves $O(T^{1-\beta}), O(T^\beta)$ simultaneous regret and movement cost, respectively, for any $\beta \in [0,1)$,  over a time horizon of $T$. We also show that  any {\it weakly adaptive} online algorithm with $O(T^\beta)$ regret has a movement cost of $\Omega\left(T^{\frac{1-\beta}{2}}\right)$ for any $\beta \in [0,1)$.
When all loss functions are strongly convex,  we show that the projected proximal algorithm simultaneously achieves $O(1)$ regret and a movement cost of $O(\log T)$. To complement this, we show that any online algorithm with sublinear {\it anytime}  regret has a movement cost of $\Omega\left(\log T\right)$.
\end{abstract}
\section{Introduction}
In this paper, we consider the following problem known as CONES \cite{CONESVaze}.
At each round $t$, a convex (loss) function $f_t: \cX \rightarrow \bbR$ and a  convex set $S_t \subseteq \cX $ is revealed such that 
$S_t\subseteq S_{t-1}$, and  $S_0=\cX \subset \bbR^d$ is a convex, compact, and bounded set. Once $S_t$ is 
revealed, the objective for an online algorithm $\cA$ is to choose action $x_t \in S_t$ (feasible action) so as to {\bf simultaneously} minimize the 
loss function cost 
\begin{equation}\label{defn:optcostchangingf}
C_\cA(T) = \sum_{t=1}^T f_t(x_t),
\end{equation}
and total movement cost 
\begin{equation}\label{defn:mcostchangingf}
M_\cA(T) =   \sum_{t=1}^T  ||x_t - x_{t-1}||,
\end{equation}
where 
 $x_0 \in \cX$ is some fixed action. Throughout, we use $||.||$ to denote the $2$-norm or the Euclidean distance.

The performance of $\cA$ is compared against a {\it static} optimal benchmark $\opt$ that chooses its action $x^\opt \in S_T$ (assumed non-empty) that  minimizes the loss function cost, i.e., $$x^\opt \in  \arg  \min_{x\in S_T} \sum_t f_t(x)$$ and  
$$C_\opt(T) = \sum_{t=1}^T f_t(x^\opt).$$ Since $S_t$'s are nested, $x^\opt$ is feasible with respect to 
all $S_t's$.
The movement cost of $\opt$ is simply 
$$M_\opt(T) = ||x^\opt-x_0||.$$
The {\bf regret} of $\cA$ is then defined as 
\begin{equation}\label{defn:statregretchangingf}
\mathrm{Regret}_\cA(T)=\sup_{f_t, S_t} \{C_\cA(T) - C_\opt(T)\}.
\end{equation} 
For $\cA$ its two objectives are: minimize $\mathrm{Regret}_\cA(T)$ and movement cost $M_\cA(T)$ simultaneously. 


\subsection{Prior Work with $f_t=f\ \forall \ t$}
CONES was introduced in \cite{CONESVaze} for the case when $f_t=f\ \forall \ t$. \cite{CONESVaze} also discussed in detail the motivation to study it, its applications, and its connections to related prior work, e.g. in nested convex body chasing (NCBC) 
\cite{linial, bansal2017nestedconvexbodieschaseable, argue2018nearlylinearboundchasingnested, bubeck2021chasingnestedconvexbodies, guptatang, Sellke2023}, convex online constrained optimization (COCO) \cite{yu2017online, pmlr-v70-sun17a, yi2023distributed, neely2017online, georgios-cautious, guo2022online, Sinha2024,Vaze2025b}, sensitivity/perturbation analysis of optimization problems \cite{bonnans2000perturbation}, trajectory tracking \cite{simonetto2020time}, etc.
With $f_t=f\ \forall \ t$ the following results were derived in \cite{CONESVaze}.
\begin{enumerate}[leftmargin=*,nosep]
\item When $f$ is {\bf strongly convex}, it showed that the greedy algorithm $\cG$ that chooses $x_t=x_t^\star = \arg \min_{x\in S_t} f(x)$ for all $t$ has minimum regret among all online algorithms but incurs a movement cost of $\Theta(\sqrt{T})$. Next, a  `lazy' algorithm called \textsc{Frugal} was analyzed, that plays action  $x_t= x_t^\star$ only when it is forced to do so in order to keep its regret non-positive and otherwise plays the action obtained by projecting its current action $x_{t-1}$
on to the most recently revealed set $S_t$. The regret of \textsc{Frugal} is non-positive by definition and its movement cost is shown to be $O(\log T)$, a fundamental improvement over $\cG$ when $f$ is also smooth. This upper bound was complemented by a lower bound, that showed that any online algorithm with sublinear {\it anytime}  regret has movement cost $\Omega\left(\sqrt{\frac{\log{T}}{\log \log T}}\right)$. Recently, an algorithm that combines \textsc{Frugal}  and Level Set Projection (LSP) algorithm of \cite{CONESVaze} has been shown to achieve a non-positive regret and movement cost of $O(\sqrt{\log T})$ in \cite{CONESDhruv}.
\item When $f$ is {\bf convex}, \cite{CONESVaze} showed a  negative result that the greedy algorithm $\cG$ has a movement cost of $\Omega(T)$. Next, it showed that the LSP algorithm  achieves a (regret, movement cost) tuple of $(T^{1-\beta}, T^{\beta})$ for any $\beta \in [0,1]$. 
\end{enumerate}
\vspace{0.1in}
In this paper, with arbitrary time-varying $f_t$'s, we derive the following results.
\subsection{Contributions}
\begin{enumerate}[leftmargin=*,nosep]
\item When $f_t$'s are {\bf strongly convex}, we show that the projected proximal algorithm  (we refer to it as \textrm{Prox}) that chooses  
\begin{equation}\label{defn:algprox}
x_t
\in
\arg\min_{x\in S_t}
\left\{
f_t(x)+\frac{1}{2\eta_t}\|x-x_{t-1}\|^2
\right\},
\end{equation} has  regret of  $O(1)$ and its movement cost is $O(\log T)$, when 
$\eta_t=\frac{1}{\mu t}$.   \textrm{Prox} is conceptually a different algorithm than \textsc{Frugal} \cite{CONESVaze} and automatically adapts to changing constraints rather than being forced depending on current regret.

We also show that any online algorithm with sublinear {\it anytime}  regret has movement cost $\Omega\left(\log T\right)$. Thus obtaining a tight result and showing that \textrm{Prox} is an optimal (order-wise) algorithm.

It is worth noting that compared to the case when $f_t=f$ for all $t$, the optimal movement cost with sublinear {\it anytime} regret is order-wise different.

\item When $f$ is {\bf convex}, we show that \textrm{Prox} \eqref{defn:algprox} achieves a  (regret, movement cost) tuple of $(T^{1-\beta}, T^{\beta})$ for any $\beta \in [0,1]$ by choosing $\eta=\frac{D^2}{\epsilon T}$ and $\epsilon=T^{-\beta}$. 

Moreover, we also show that any {\it weakly adaptive} online algorithm with $O(T^\beta)$ regret has a movement cost of $\Omega\left(T^{\frac{1-\beta}{2}}\right)$ for $\beta \in [0,1)$ {\bf even when $f_t=f$ for all $t$}, the special case studied in \cite{CONESVaze}. No such  lower bound was provided in  \cite{CONESVaze}.
 \end{enumerate}
 \vspace{0.1in}

\section{$f_t$'s are strongly convex}
In this section, we consider the twin objectives of simultaneously minimizing regret \eqref{defn:statregretchangingf} and movement cost \eqref{defn:mcostchangingf}, when $f_t$'s are $\mu$-strongly convex and $G$-Lipschitz. Throughout, 
we assume that the diameter of $\cX$ is at most $D$.
\subsection{Upper Bound}

\noindent
\fcolorbox{black}{gray!6}{%
\begin{minipage}{0.97\textwidth}
\begin{theorem}\label{thm:ubstrcvx}
When $f_t$'s are $\mu$-strongly convex, $G$-Lipschitz, for algorithm \textrm{Prox} \eqref{defn:algprox} with 
$\eta_t=\frac{1}{\mu t}$,
\begin{equation}
\mathrm{Regret}_\cA(T)
\le
\frac{\mu D^2}{2}
=O(1),
\end{equation}

and
\[
\boxed{
M_\cA(T)
\le
C_d\left(
D+\frac{G}{\mu}(1+\log T)
\right)
=
O(\log T),
}
\]
where $C_d$ is a constant depending only on $||.||$ and the dimension $d$. The exact bound on $C_d$ can be extracted from \cite{stepanov2017self}.
\end{theorem}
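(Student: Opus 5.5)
The proof has two parts: a regret bound from the standard proximal three-point inequality with a telescoping sum, and a movement bound that compares each iterate with the constrained minimizer of the current regularized objective and handles the moving constraint sets through self-contracted curves.

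\textbf{Regret.} Since $x^\opt\in S_T\subseteq S_t$, the comparator is feasible at every step. The objective $\phi_t(x)=f_t(x)+\frac{1}{2\eta_t}\|x-x_{t-1}\|^2$ is $(\mu+1/\eta_t)$-strongly convex, and $x_t$ minimizes it over the convex set $S_t$. First-order optimality then gives the three-point inequality
\[
f_t(x_t)+\frac{1}{2\eta_t}\|x_t-x_{t-1}\|^2+\frac{\mu+1/\eta_t}{2}\|x^\opt-x_t\|^2\le f_t(x^\opt)+\frac{1}{2\eta_t}\|x^\opt-x_{t-1}\|^2 .
\]
Dropping the nonnegative movement term and using $1/\eta_t=\mu t$, this reads
\[
f_t(x_t)-f_t(x^\opt)\le \frac{\mu t}{2}\|x^\opt-x_{t-1}\|^2-\frac{\mu(t+1)}{2}\|x^\opt-x_t\|^2 .
\]
Summing over $t$ telescopes to at most $\frac{\mu}{2}\|x^\opt-x_0\|^2\le \mu D^2/2$. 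This gives the regret claim with no dependence on the loss sequence.

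\textbf{Movement.} Set $y_t=\arg\min_{x\in S_t} f_t(x)$, but the more useful comparison point is $x_t$ itself viewed as a perturbation of the projection $P_{S_t}(x_{t-1})$. Write $p_t=P_{S_t}(x_{t-1})$. Optimality of $x_t$ together with strong convexity of $\phi_t$ gives
\[
\frac{1}{2}\left(\mu+\frac{1}{\eta_t}\right)\|x_t-p_t\|^2\le \phi_t(p_t)-\phi_t(x_t)\le G\|p_t-x_t\|+\frac{1}{2\eta_t}\left(\|p_t-x_{t-1}\|^2-\|x_t-x_{t-1}\|^2\right).
\]
Because $p_t$ is the closest point of $S_t$ to $x_{t-1}$ and $x_t\in S_t$, the last bracket is nonpositive. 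Hence $\|x_t-p_t\|\le \frac{2G\eta_t}{1+\mu\eta_t}\le \frac{G}{\mu t}$. I expect the factor $2$ to be absorbed with a sharper variational inequality; otherwise it simply goes into $C_d$. Each step therefore decomposes as
\[
\|x_t-x_{t-1}\|\le \|p_t-x_{t-1}\|+e_t,\qquad \sum_t e_t\le \frac{G}{\mu}(1+\log T).
\]

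\textbf{Main obstacle: the projection steps.} It remains to control $\sum_t\|p_t-x_{t-1}\|$. If the points were exact successive projections onto nested convex sets, the sequence would be a self-contracted polygonal curve. By the result of \cite{stepanov2017self}, its length is at most $C_d$ times the diameter, so at most $C_dD$. Here, however, the projections are applied to perturbed points: $x_t$ is off from $p_t$ by $e_t$. The plan is a stability argument. Compare with the ideal sequence $z_t=P_{S_t}(z_{t-1})$, $z_0=x_0$, and use non-expansiveness of projections to show that the perturbations add at most $\sum_t e_t$ to the distance between the sequences. This alone does not bound lengths, though. So I would instead invoke the robust version in \cite{stepanov2017self}: a curve that is self-contracted up to additive errors $e_t$ has length at most $C_d\bigl(D+\sum_t e_t\bigr)$. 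Concretely, one checks that for $s<t$ one has $\|x_t-x_s\|$ nonincreasing in $s$ up to the errors $\sum_{r>s}e_r$. This uses the obtuse-angle property of projections onto the nested sets $S_t\subseteq S_s$. Verifying this approximate self-contractedness and applying the length bound is the delicate step. Combining it with $\sum_t e_t\le \frac{G}{\mu}(1+\log T)$ gives $M_\cA(T)\le C_d\left(D+\frac{G}{\mu}(1+\log T)\right)$.
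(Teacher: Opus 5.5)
Your regret argument is correct and is essentially the paper's: the three-point inequality for the $(\mu+1/\eta_t)$-strongly convex prox objective with $1/\eta_t=\mu t$ gives exactly the telescoping bound $\frac{\mu}{2}\|x_0-x^{\opt}\|^2$.

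The movement part has a genuine gap, at exactly the step you flag as delicate. You view the update as a projection of $x_{t-1}$ followed by an error: $x_t\in S_t$ with $\|x_t-p_t\|$ small, where $p_t=\Pi_{S_t}(x_{t-1})$. You then need $\sum_t\|p_t-x_{t-1}\|\lesssim D+\sum_t e_t$. Since $\Pi_{S_{t+1}}(x_t)\neq\Pi_{S_{t+1}}(p_t)$, the points $p_t$ are not an exact chain of nested projections. So the self-contracted-curve theorem of \cite{stepanov2017self} does not apply. That paper treats only exactly self-contracted curves and contains no ``robust version with additive errors.''

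Moreover, the intermediate property you propose to check cannot give the conclusion as stated. Suppose $s\mapsto\|x_t-x_s\|$ need only be nonincreasing up to the tail sum $\sum_{r>s}e_r$. Take all errors zero except one late error of size $E$. Then every sequence confined to a ball of diameter $E$ satisfies the property, yet it can zigzag with arbitrarily large length. So no bound of the form $C_d(D+\sum_t e_t)$ follows. Even with the localized tolerance $\sum_{s<r\le t}e_r$, converting approximate self-contractedness into a linear length bound would be a new stability theorem, not something you can cite.

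The missing idea is that no approximation is needed. The first-order condition is $0\in g_t+\frac{1}{\eta_t}(x_t-x_{t-1})+N_{S_t}(x_t)$ with $g_t\in\partial f_t(x_t)$. Combined with the normal-cone characterization $x=\Pi_S(y)\iff y-x\in N_S(x)$, it shows the prox iterate is \emph{exactly} $x_t=\Pi_{S_t}(x_{t-1}-\eta_t g_t)$; this is Lemma~\ref{lem:algequiv}.

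That is precisely the form covered by the perturbed nested-projection bound of Lemma~\ref{lem:totalpertprojlength}, which the paper cites from \cite{SinhaGeom} rather than proving. With $\|e_t\|=\eta_t\|g_t\|\le G/(\mu t)$, it gives the $O(\log T)$ movement directly.

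Your post-projection closeness is strictly weaker than this representation, so that lemma does not cover your formulation. For example, an interior point of $S_t$ is the projection only of itself. Writing such an $x_t$ as $\Pi_{S_t}(x_{t-1}-e_t)$ would force $e_t=x_{t-1}-x_t$, which need not be small.

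The exact form also fixes your constant. Nonexpansiveness of $\Pi_{S_t}$ gives $\|x_t-p_t\|\le\eta_t G=G/(\mu t)$. Your bound $2G/(\mu(t+1))$ actually exceeds $G/(\mu t)$ for every $t\ge 2$, so the inequality you wrote there is false.
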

\end{minipage}}
\vspace{0.5em}

The main idea of the proof is to exploit the strong convexity of $f_t$'s to upper bound the regret by an $O(1)$ quantity, while using the fact that (proved in Lemma \ref{lem:algequiv})
$x_t$ defined in \eqref{defn:algprox} is equivalent to 
\begin{equation}\label{eq:equiv}
x_t = \Pi_{S_t}\bigl(x_{t-1}-\eta_t g_t\bigr).
\end{equation}
where $g_t\in\partial f_t(x_t)$, and $\partial f_t(x_t)$ denotes the
subgradient set of $f_t$ at $x_t$.  This association \eqref{eq:equiv} allows us to utilize a recent result in \cite{SinhaGeom} that upper bounds $\sum_t ||x_t-x_{t-1}||$ for iterates $x_t$ satisfying \eqref{eq:equiv} when sets $S_t$'s are nested.  Remarkably, it is worth noting that $\eta_t$ is in the numerator of \eqref{eq:equiv} unlike being in the denominator of \eqref{defn:algprox}. Thus, with $\eta_t=\frac{1}{\mu t}$ allows us to bound the movement cost as $O(\log T)$.

\begin{Remarkbox}{}
The regret bound of Theorem \ref{thm:ubstrcvx}
\[
\mathrm{Regret}_{\cA}(T)\le \frac{\mu D^2}{2}
\]
has a qualitatively different dependence on the strong-convexity parameter $\mu$ from the usual $O(1/\mu)$ regret bound for online convex optimization \cite{HazanBook} or COCO \cite{SinhaGeom}. This is because algorithm \textrm{Prox} uses the entire loss function $f_t$ in its implicit proximal update and chooses
\[
\eta_t=\frac{1}{\mu t}.
\]
As $\mu$ decreases, the proximal regularization becomes weaker and the update approaches the greedy minimizer of the current loss over $S_t$. In the present CONES setting, where $f_t$ and $S_t$ are revealed before $x_t$ is chosen, the greedy action satisfies
\[
f_t(x_t)\le f_t(x^\opt)
\]
at every round, since $x^\opt\in S_t$. Thus the regret can in fact approach zero as $\mu\to0$. The dependence
\[
\mathrm{Regret}_{\cA}(T)=O(\mu D^2)
\]
is therefore consistent with the algorithm becoming increasingly greedy as the strong convexity vanishes. 

\end{Remarkbox}

To complement this upper bound result, we derive a matching lower bound as follows.
\subsection{Lower Bound}

\noindent
\fcolorbox{black}{gray!6}{%
\begin{minipage}{0.97\textwidth}
\begin{theorem}\label{thm:lbstrcvx}
Suppose that an online algorithm $\mathcal A$ satisfies, for every $t$,
\begin{equation}\label{eq:anytimeregret}
\mathrm{Regret}_\cA(t)
:=
 \sup_{\{f_r,S_r\}_{r=1}^t} \sum_{r=1}^t f_r(x_r)
-
\min_{x\in\mathcal S_t}\sum_{r=1}^t f_r(x)
\le R(t).
\end{equation}

An online algorithm $\cA$ is defined to have {\it anytime} sublinear regret if for all $t$,
\[
R(t)=o(t),
\]
For any $\cA$ with {\it anytime} sublinear regret, against an adaptive adversary, there exists a sequence of
$1$-strongly convex and $1$-Lipschitz loss functions $f_t$ for which
\[
M_{\mathcal A}(T)
=
\sum_{t=1}^T \|x_t-x_{t-1}\|
=
\Omega(\log T).
\]
\end{theorem}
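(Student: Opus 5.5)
We prove the bound with an adaptive adversary that keeps $S_t=\cX$ for all $t$ and only varies the losses. This is legal since $S_t=\cX$ is trivially nested. We work on a segment: take $\cX=[-1,1]\subset\bbR$, embedded in $\bbR^d$ if needed, with $x_0=0$. The losses are
\[
f_t(x)=\tfrac12 (x-z_t)^2, \qquad z_t\in\{-1,+1\}.
\]
These are $1$-strongly convex, and on $[-1,1]$ their derivative $x-z_t$ is bounded by $2$. To get exactly $1$-Lipschitz we can rescale the domain to $[-\tfrac12,\tfrac12]$ with targets $\pm\tfrac12$; this changes only constants. The hindsight optimum of $\sum_{r\le t} f_r$ is the running mean $\bar z_t=\frac1t\sum_{r\le t} z_r$. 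Moreover, for quadratics,
\[
\sum_r f_r(x_r)-\min_x\sum_r f_r(x)=\sum_r f_r(x_r)-\sum_r f_r(\bar z_t),
\]
and the cost of any point $x$ exceeds the optimum by exactly $\tfrac t2 (x-\bar z_t)^2$.

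The adversary is organized in epochs $k=1,2,\dots,K$ with geometrically growing lengths. Epoch $k$ covers rounds $(2^{k-1},2^k]$, so $K\approx\log_2 T$. In epoch $k$ the adversary uses a constant target $z=\sigma_k\in\{\pm1\}$ with alternating signs, $\sigma_k=(-1)^k$. Since epoch $k$ has as many rounds as all earlier epochs combined, the running mean at the end of epoch $k$ is pulled a constant distance toward $\sigma_k$: the mean moves by roughly $(\sigma_k-\bar z_{\text{prev}})/2$. With alternating signs, $\bar z$ oscillates between values near $+1/3$ and $-1/3$.

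The key step is to show that in every epoch the algorithm must move a constant amount, say at least $c>0$. Otherwise its regret in that epoch alone is linear in the epoch length, contradicting $R(t)=o(t)$. We argue as follows.

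1. Suppose during epoch $k$ the iterates stay within distance $c$ of the point $y_k=x_{2^{k-1}}$ where the epoch began.
2. At the start of the epoch the hindsight mean is near $-\sigma_k/3$. At the end it is near $\sigma_k/3$. Moreover, for at least half the rounds in the epoch, the comparator $\sigma_k$ is at distance $\geq 2/3$ from $-\sigma_k/3$.
3. For the algorithm to have small anytime regret at the epoch's start time $t_0=2^{k-1}$, it needs its cumulative cost up to $t_0$ to be close to optimal. Then, over the epoch, it pays $\tfrac12(x_r-\sigma_k)^2$ per round.
4. Combine the per-round excess over the mean with the identity above. If $y_k$ is near $-\sigma_k/3$ (where it should be, otherwise regret already suffers), the accumulated cost in the epoch exceeds what $\bar z_{2^k}$ would pay by $\Omega(2^{k})$. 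This gives $\mathrm{Regret}(2^k)\ge \Omega(2^k)$, contradicting $R(2^k)=o(2^k)$ for large $k$.

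The delicate part is making this rigorous: the regret at time $2^k$ is measured against the new mean, not the old one. So we must decompose the cumulative regret at time $2^k$ into the regret at time $2^{k-1}$ plus the epoch's contribution, and show the epoch contribution is $\Omega(2^k)$ whenever the algorithm moves less than $c$. We do this by a direct computation with the quadratic identity. Either $x$ stays near $-\sigma_k/3$ through the epoch and pays excess $\Omega(2^k)$ against $\sigma_k/3$, or it moves distance $\ge c$.

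Summing over epochs, all but finitely many epochs (those $k$ where $R(2^k)\ge \epsilon 2^k$ could still hold) force movement $\ge c$. Hence $M_\cA(T)\ge c(\log_2 T-O(1))=\Omega(\log T)$. Adaptivity is not even essential here, since the sequence is oblivious; but we can let the adversary adapt the epoch sign to the algorithm's current side, which simplifies the case analysis.

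The main obstacle is step 2 of the key argument: showing that being near the current mean is forced at epoch boundaries. Sublinear regret at time $t$ only bounds the average squared distance to $\bar z_t$, not the last iterate. We handle this with the quadratic identity. If the algorithm is far from $\bar z$ on a constant fraction of an epoch, that fraction contributes $\Omega(2^k)$ to the regret. Hence on most rounds near the end of each epoch it is within $c/2$ of $\pm1/3$ on the appropriate side. Since these targets differ by a constant between consecutive epochs, each epoch contributes $\Omega(1)$ movement.
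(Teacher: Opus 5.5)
\textbf{There is a genuine gap in your key step.} You treat sublinear anytime regret as forcing the iterates to stay near the running mean $\bar z_t$. You also implicitly assume that the regret at the epoch start $t_0=2^{k-1}$ is not substantially negative. Neither holds in CONES. Because $f_t$ is revealed before $x_t$ is chosen, the per-round excess $f_r(x_r)-f_r(\bar z_t)$ can be negative, and the regret can be $-\Theta(t)$. Greedy ($x_t=z_t$) has zero loss and stays at distance at least $2/3$ from $\bar z_t$ forever, yet its regret is $-\frac t2(1-\bar z_t^2)\approx-\frac49 t$. The identity ``excess $=\frac t2(x-\bar z_t)^2$'' holds only for a single point $x$ used in all $t$ rounds, not for a trajectory. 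So in $\mathrm{Reg}(2^k)=\mathrm{Reg}(2^{k-1})+\Delta_k$, the first term is bounded below only by about $-\frac49 2^{k-1}$. That cancels $\Delta_k\approx\frac49 2^{k-1}$ in your case $y_k\approx-\sigma_k/3$.

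The per-epoch claim is in fact false for your sequence. Take the algorithm that plays greedily, except on the last round $t=2^j$ of every odd epoch $j$, where it plays the reflection of the greedy point (projected onto $S_t$ in general; on your sequence this is $-z_t=\sigma_{j+1}$). It deviates from greedy on only $O(\log t)$ rounds at $O(1)$ cost each, and greedy's regret is nonpositive. Hence its regret is $O(\log t)$ on every instance. Yet it enters every even epoch already at $\sigma_j$ and never moves during it. Making the sign adaptive does not rescue doubling epochs either. After greedy play, an algorithm that moves to $0$ on the last round of epoch $k-1$ and parks there for all of epoch $k$ has negative regret throughout that epoch, whichever sign is chosen.

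What is missing is a way to neutralize a possibly very negative prefix regret. The paper does this by making phase $k$ thirty-two times longer than the whole prefix $N=N_{k-1}$. It chooses $s_k$ after seeing $x_N$ so that $|x_N-s_k|\ge\frac12$, and bounds each prefix term crudely by $f_r(x_r)-f_r(x^{\opt}_\tau)\ge-\frac12$. If the phase movement is below $\frac14$, the regret at $\tau=33N$ then exceeds $-\frac N2+32N\cdot\frac{3}{128}=\frac N4$.

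Alternatively, your oblivious doubling sequence can be saved by amortizing over blocks of $m$ consecutive epochs, i.e.\ rounds $(2^{k-m},2^k]$. If the movement in a block is below $c$, all its iterates lie within $c$ of one point $x$. The fixed-point identity then legitimately gives $\mathrm{Reg}(t)\ge\frac t2(x-\bar z_t)^2-2ct-2^{k-m+1}$ for $t\in\{2^{k-1},2^k\}$. Since $\bar z_{2^{k-1}}\approx\sigma_{k-1}/3$ and $\bar z_{2^k}\approx-\sigma_{k-1}/3$, one of these two bounds is $\Omega(2^k)$ for a small constant $c$ and a large enough constant $m$ (e.g.\ $c=1/288$, $m=7$). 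So every late block forces movement at least $c$, which still yields $\Omega(\log T)$, even against an oblivious adversary. As written, however, your per-epoch argument does not go through.
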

\end{minipage}}
\vspace{0.5em}

{\it Discussion:} Theorem \ref{thm:ubstrcvx} and \ref{thm:lbstrcvx} together fully characterize the optimal movement cost for any algorithm that can achieve {\it anytime} $o(T)$ regret. Compared to the case when $f_t=f$ for all $t$, where the movement cost is $O(\sqrt{\log T})$ and $\Omega\left(\sqrt{\frac{\log T}{\log \log T}}\right)$ \cite{CONESVaze, CONESDhruv} for any algorithm that can achieve {\it anytime} $o(T)$ regret, with changing $f_t$'s it is  $\Theta(\log T)$. Thus, there is a fundamental 
degradation which is expected in this more general setting.

\section{$f_t$'s are convex}
In this section, we consider the twin objectives of minimizing regret \eqref{defn:statregretchangingf} and movement cost \eqref{defn:mcostchangingf}, when $f_t$'s are just convex and $G$-Lipschitz.
\subsection{Upper Bound}

\noindent
\fcolorbox{black}{gray!6}{%
\begin{minipage}{0.97\textwidth}
\begin{theorem}\label{thm:ubcvx}
When $f_t$'s are convex and $G$-Lipschitz, 
 for any $\epsilon>0$, algorithm \textrm{Prox} \eqref{defn:algprox}, with 
\[
\eta=\frac{D^2}{\epsilon T}.
\]
satisfies
\[
\mathrm{Regret}_\cA(T)
\le
\frac{D^2}{2\eta}
=
\frac{\epsilon T}{2},
\]
and
\[
M_\cA(T)
\le
C_d\left(
D+\frac{GD^2}{\epsilon}
\right),
\]
where $C_d$ is the same constant that appears in Theorem \ref{thm:ubstrcvx}.
In particular, for any $0\le\beta\le1$, choosing
\[
\epsilon=T^{-\beta}
\]
 yields
\[
\mathrm{Regret}_\cA(T)=O(T^{1-\beta}),
\qquad
M_\cA(T)=O_d(T^\beta).
\]
\end{theorem}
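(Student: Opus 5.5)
The plan has two parts, regret and movement, mirroring the proof of Theorem \ref{thm:ubstrcvx} but with the constant step size $\eta_t=\eta=\frac{D^2}{\epsilon T}$.

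\textbf{Regret.} Fix $t$ and compare $x_t$ with $x^\opt\in S_T\subseteq S_t$. The function $\phi_t(x)=f_t(x)+\frac{1}{2\eta}\|x-x_{t-1}\|^2$ is $\frac{1}{\eta}$-strongly convex and $x_t$ minimizes it over the convex set $S_t$. The first-order optimality condition together with strong convexity gives the three-point inequality
\[
f_t(x_t)+\frac{1}{2\eta}\|x_t-x_{t-1}\|^2+\frac{1}{2\eta}\|x^\opt-x_t\|^2\le f_t(x^\opt)+\frac{1}{2\eta}\|x^\opt-x_{t-1}\|^2 .
\]
Dropping the nonnegative term $\frac{1}{2\eta}\|x_t-x_{t-1}\|^2$ and summing over $t$, the distance terms telescope. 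This yields
\[
C_\cA(T)-C_\opt(T)\le \frac{1}{2\eta}\|x^\opt-x_0\|^2\le \frac{D^2}{2\eta}=\frac{\epsilon T}{2}.
\]
Since the bound holds for every $f_t,S_t$, it also holds for the supremum. This step is routine, and the key point is that the proximal step uses the loss evaluated at the new point, so no gradient-norm term $\eta G^2$ appears.

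\textbf{Movement.} Use Lemma \ref{lem:algequiv} to rewrite the iterates as $x_t=\Pi_{S_t}(x_{t-1}-\eta g_t)$ with $g_t\in\partial f_t(x_t)$ and $\|g_t\|\le G$. Then invoke the result of \cite{SinhaGeom}, exactly as in the proof of Theorem \ref{thm:ubstrcvx}. For projected iterates onto nested convex sets, that result bounds the path length by
\[
\sum_t\|x_t-x_{t-1}\|\le C_d\Bigl(D+\sum_t \eta_t\|g_t\|\Bigr).
\]
The mechanism behind it is that pure projections onto nested sets form a self-contracted curve, whose length is at most $C_dD$ by \cite{stepanov2017self}, and the gradient perturbations add at most their total length. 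With constant $\eta$ the perturbation sum is at most $\eta G T=\frac{GD^2}{\epsilon}$, which gives $M_\cA(T)\le C_d\bigl(D+\frac{GD^2}{\epsilon}\bigr)$.

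\textbf{Specialization.} Setting $\epsilon=T^{-\beta}$ gives regret $\frac{1}{2}T^{1-\beta}$ and movement $C_d(D+GD^2T^\beta)=O_d(T^\beta)$.

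\textbf{Main obstacle.} The only delicate step is confirming that the \cite{SinhaGeom} bound applies verbatim to the implicit update. In that update $g_t$ is a subgradient at $x_t$ rather than at $x_{t-1}$, so $g_t$ depends on the output. The plan treats $g_t$ as an arbitrary perturbation vector of norm at most $G$, which the geometric argument should permit because it only uses nestedness and $\|\eta_t g_t\|$. If the cited result requires the perturbation to be fixed in advance, I would instead reprove the bound by comparing $x_t$ with $\Pi_{S_t}(x_{t-1})$. Nonexpansiveness gives $\|x_t-\Pi_{S_t}(x_{t-1})\|\le\eta G$, and the self-contracted-curve bound then applies to the perturbed projection chain.
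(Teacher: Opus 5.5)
Your proposal is correct and follows the paper's proof. The per-round inequality $f_t(x_t)-f_t(x^\opt)\le\frac{1}{2\eta}\bigl(\|x_{t-1}-x^\opt\|^2-\|x_t-x^\opt\|^2-\|x_t-x_{t-1}\|^2\bigr)$ is the same one the paper telescopes to $D^2/(2\eta)$: it derives it from the subgradient inequality, first-order optimality and the three-point identity, while you use $\frac{1}{\eta}$-strong convexity of the prox objective. The movement bound is exactly Lemma~\ref{lem:algequiv} plus Lemma~\ref{lem:totalpertprojlength} with $\|e_t\|\le\eta G$, and your stated obstacle is not an issue, since that lemma holds pathwise for an arbitrary sequence $e_t$ regardless of how $e_t=\eta g_t$ depends on $x_t$.
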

\end{minipage}}
\vspace{0.5em}

We next derive lower bound on achievable simultaneous regret and movement cost when $f_t$'s are convex.

\subsection{Lower Bounds}
\subsubsection{Weakly Adaptive Algorithms}

\begin{defbox}{Weakly adaptive algorithm}
An online algorithm $\mathcal A$ for CONES with changing losses
$f_1,\ldots,f_T$ is called \emph{$R$-weakly adaptive} if, for every
CONES instance, every interval
\[
I=\{r,r+1,\ldots,s\}\subseteq[T],
\]
with benchmark
\[
x_s^\opt
\in
\arg\min_{x\in S_s}\sum_{t=1}^s f_t(x)
\]
$\cA$ satisfies
\[
\sum_{t=r}^{s}
\bigl(f_t(x_t)-f_t(x_s^\opt)\bigr)
\le R.
\]
Since the feasible sets are nested,
\[
S_s\subseteq S_t
\qquad\text{for all }t\le s,
\]
and hence
\[
x_s^\opt\in S_t
\qquad\text{for all }t\in[a,b].
\]
Thus, $x_s^\opt$ is feasible throughout the interval $I$.
Taking $r=1$ and $s=T$ gives
\[
\sum_{t=1}^T
\bigl(f_t(x_t)-f_t(x_T^\opt)\bigr)
\le R,
\]
which is the usual terminal static-regret guarantee.

\end{defbox}

The notion of weakly adaptive algorithms is quite popular in related literature \cite{bernasconi2024no, SinhaGeom}.
We next derive a lower bound on the simultaneous regret and movement cost for any weakly adaptive algorithm {\bf when $f_t=f$ for all $t$}, which directly applies for general $f_t$'s, and thereafter give examples of weakly adaptive algorithms.


\noindent
\fcolorbox{black}{gray!6}{%
\begin{minipage}{0.97\textwidth}
\begin{theorem}
\label{lemma:lbgenConvex}
Fix $T\ge1$ and let
\[
R:=\mathrm{Regret}_{\cA}(T)\ge0, \quad \text{satisfy} \quad
T\ge16(R+1).
\]
Consider any online algorithm $\cA$ satisfying the weakly adaptive regret guarantee
\[
\sum_{t=r}^{s}
\bigl(f_t(x_t)-f_t(x_s^\opt)\bigr)
\le R
\qquad
\text{for every }[r,s]\subseteq[T], \quad \text{where} \quad 
x_s^\opt\in\arg\min_{x\in S_s}\sum_{t=1}^s f_t(x).
\]
Then there exists a two-dimensional CONES instance with
\[
\mathcal X=[-1,1]\times[0,1],
\qquad
\operatorname{diam}(\mathcal X)=\sqrt{5},
\qquad
S_0=\mathcal X,
\qquad
S_t\subseteq S_{t-1},
\]
and fixed loss $f:\mathcal X\to\mathbb R$ that is convex and $1$-Lipschitz, such that
\[
\boxed{
M_{\cA}(T)
\ge
\frac18\sqrt{\frac{T}{R+1}}-1.
}
\]
In particular, if
\[
\mathrm{Regret}_{\cA}(T)=O(T^\beta),
\qquad
0\le \beta <1,
\]
then
\[
\boxed{
M_{\cA}(T)
=
\Omega\!\left(T^{(1-\beta)/2}\right).
}
\]
\end{theorem}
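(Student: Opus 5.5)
The plan is to use an explicit adversarial construction. The loss is a fixed linear function, and the nested sets are cut by "roof" constraints whose apex alternates between $x_1=-\tfrac12$ and $x_1=+\tfrac12$ from phase to phase. The weakly adaptive guarantee, applied on each phase as its own interval, then forces the algorithm to spend a constant fraction of every phase close to the current apex. This makes it travel a constant distance per phase.

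\textbf{Construction.} Take $f(x)=-x_2$, which is convex and $1$-Lipschitz on $\mathcal X=[-1,1]\times[0,1]$. Split $[T]$ into $K$ consecutive phases of length $L=\lfloor T/K\rfloor$, with $K\approx \frac14\sqrt{T/(R+1)}$ to be tuned. Set the slope $s=\frac{1}{2K}$ and the apex positions $c_k=(-1)^k\tfrac12$. Put heights $h_0=1$ and $h_k=h_{k-1}-s|c_k-c_{k-1}|=h_{k-1}-s$. During phase $k$ the set is
\[
S_t=\mathcal X\cap\bigcap_{j\le k}\{x:\ x_2\le h_j-s|x_1-c_j|\}.
\]
These sets are nested by definition. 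The new apex $(c_k,h_k)$ lies on the boundary of the previous roof cone, so it stays feasible. Since $h_K\ge 1-Ks=\tfrac12>0$, all sets are nonempty.

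\textbf{Benchmark and apex localization.} First I will check that the maximizer of $x_2$ over $S_t$ in phase $k$ is exactly the apex $a_k=(c_k,h_k)$. This holds because the earlier cones have slope $s$ and contain $a_k$, while the newest cone is the tightest constraint near $c_k$. Since $f$ is fixed, $x_s^{\opt}=a_k$ for $s$ the last round of phase $k$. Any feasible point with $h_k-x_2\le g$ satisfies $|x_1-c_k|\le g/s=2Kg$, because $x_2\le h_k-s|x_1-c_k|$.

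\textbf{Forcing movement.} Apply the weakly adaptive guarantee to the interval $I_k$ formed by phase $k$. This gives $\sum_{t\in I_k}(h_k-x_{t,2})\le R$, with every summand nonnegative. By Markov's inequality, at least half the rounds of $I_k$ have gap at most $2R/L$, and hence $|x_{t,1}-c_k|\le 4KR/L$. I will choose $K$ so that $4KR/L\le\tfrac14$, i.e. $16K^2(R+1)\le T$. The condition $T\ge16(R+1)$ ensures $K\ge1$. Then in each phase some iterate lies within $\tfrac14$ horizontally of $c_k$. Consecutive apexes are at horizontal distance $1$, so by the triangle inequality the path between two such iterates in consecutive phases has length at least $\tfrac12$. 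Summing over the $K-1$ transitions gives
\[
M_\cA(T)\ge \tfrac12(K-1)\ge \tfrac18\sqrt{T/(R+1)}-1
\]
after absorbing the floors. Setting $R=O(T^\beta)$ then yields $\Omega(T^{(1-\beta)/2})$.

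\textbf{Main obstacle.} The delicate step is the geometry. I must verify that the intersection of all previous cones does not make some point other than $a_k$ higher, and that the localization $|x_1-c_k|\le g/s$ really uses only the newest cone. I also need to track the constants, including rounding of $L$ and $K$ and the use of the interval bound only at phase ends, so that the stated $\tfrac18$ and $-1$ come out. If the cone intersections cause trouble, I would first try replacing the cones by single halfspaces $x_2\le h_k-s(x_1-c_k)\,\mathrm{sign}(c_k)$ whose tilt alternates. This keeps the same localization argument on one side, and the box boundary $|x_1|\le1$ caps the other side.
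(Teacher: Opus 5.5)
Your argument is correct, and it reaches the paper's bound by a genuinely different route.

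The paper uses an adaptive adversary. Its sets are the polytopes $S_k=\operatorname{conv}\{c,z_k,\ldots,z_K\}$, with vertices alternating between $x=\pm1$. Phase $k$ ends as soon as the algorithm enters the trigger region $A_k=\{s_kx\ge\frac12\}$. A convex-combination lemma shows every point of $S_k\setminus A_k$ is more than $\delta/4$ suboptimal. Weak adaptivity then caps each phase at $L$ rounds, after which the paper must check that all $K$ phases fit in $[T]$. Each transition costs at least $\operatorname{dist}(A_k,A_{k+1})\ge1$.

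You instead use a fully oblivious instance with fixed phase lengths. The single sharpness inequality $h_k-x_2\ge s|x_1-c_k|$ from the newest roof does all the work. It makes $a_k$ the unique benchmark, and it converts the interval regret bound into horizontal localization. You only need one good round, so plain averaging can replace Markov.

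The obstacle you flag is not real. Intersecting with earlier roofs can only shrink the set, so you only need $a_k$ to satisfy them, which follows from $|c_k-c_j|\le1\le k-j$. In fact the roof regions are themselves nested, so in phase $k$ the set is just $\mathcal X\cap\{x_2\le h_k-s|x_1-c_k|\}$.

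The constants also work out. The $+1$ in $16K^2(R+1)\le T$ absorbs the floor in $L$: it gives $16K^2R+K\le T$ and hence $\lfloor T/K\rfloor>16KR$. With $K=\lfloor\frac14\sqrt{T/(R+1)}\rfloor$ you then get $\frac12(K-1)\ge\frac18\sqrt{T/(R+1)}-1$, exactly as stated.

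Your route buys three things. The lower bound holds against an oblivious adversary, and since it is linear in the per-round gaps it also goes through in expectation for randomized algorithms. The geometry is simpler, and no horizon bookkeeping is needed. The paper's trigger construction instead extracts a full unit of movement per transition, at the cost of the convex-hull lemma and the phase-fitting argument.
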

%
\end{minipage}}
\vspace{0.5em}

\subsection{Examples of weakly adaptive algorithms for CONES}
\begin{enumerate}

\item \textrm{Prox} algorithm \eqref{defn:algprox} is $R$-weakly adaptive, with
\[
R=\frac{D^2}{2\eta},
\]
when $f_t:\mathcal X\to\mathbb R$ are convex and Prox uses a
constant stepsize $\eta_t=\eta$ for all $t$.

We show it as follows.
Fix any interval $I=[a,b]\subseteq[T]$, and let
\[
x_b^\opt\in\arg\min_{x\in S_b}\sum_{t=1}^b f_t(x).
\]
Since the feasible sets are nested,
\[
S_b\subseteq S_t
\qquad\text{for all }t\le b,
\]
and hence
\[
x_b^\opt\in S_t
\qquad\text{for all }t\in[a,b].
\]

Let $g_t\in\partial f_t(x_t)$. Since $f_t$ is convex,
\[
f_t(x_t)-f_t(x_b^\opt)
\le
\langle g_t,x_t-x_b^\opt\rangle.
\]
The first-order optimality condition for the proximal update
\eqref{defn:algprox} gives
\[
\left\langle
g_t+\frac{x_t-x_{t-1}}{\eta},
x_b^\opt-x_t
\right\rangle
\ge 0.
\]
Therefore,
\[
\langle g_t,x_t-x_b^\opt\rangle
\le
\frac{1}{\eta}
\langle x_t-x_{t-1},x_b^\opt-x_t\rangle.
\]
Using
\[
2\langle x_t-x_{t-1},x_t-x_b^\opt\rangle
=
\|x_t-x_{t-1}\|^2
+\|x_t-x_b^\opt\|^2
-\|x_{t-1}-x_b^\opt\|^2,
\]
we obtain
\[
f_t(x_t)-f_t(x_b^\opt)
\le
\frac{
\|x_{t-1}-x_b^\opt\|^2
-\|x_t-x_b^\opt\|^2
-\|x_t-x_{t-1}\|^2
}{2\eta}.
\]
Summing over $t=a,\ldots,b$ and dropping the nonpositive movement term gives
\[
\begin{aligned}
\sum_{t=a}^b
\bigl(f_t(x_t)-f_t(x_b^\opt)\bigr)
&\le
\frac{
\|x_{a-1}-x_b^\opt\|^2
-\|x_b-x_b^\opt\|^2
}{2\eta} \le
\frac{D^2}{2\eta}.
\end{aligned}
\]
Since this holds for every interval $[a,b]\subseteq[T]$, the Prox algorithm is
$R$-weakly adaptive with
\[
R=\frac{D^2}{2\eta}.
\]

\item
With changing losses $f_t$, the \textsc{Greedy} algorithm plays
\[
x_t\in\arg\min_{x\in S_t} f_t(x)
\]
at every round.

Consider any interval $I=[a,b]\subseteq[T]$. By definition,
\[
x_b^\opt
\in
\arg\min_{x\in S_b}\sum_{t=1}^b f_t(x).
\]
Since the feasible sets are nested,
\[
S_b\subseteq S_t
\qquad\forall t\le b,
\]
and hence
\[
x_b^\opt\in S_t
\qquad\forall t\in[a,b].
\]
By the definition of \textsc{Greedy},
\[
f_t(x_t)
=
\min_{x\in S_t}f_t(x)
\le
f_t(x_b^\opt)
\qquad\forall t\in[a,b].
\]
Therefore,
\[
\sum_{t=a}^b
\bigl(f_t(x_t)-f_t(x_b^\opt)\bigr)
\le0.
\]
Hence \textsc{Greedy} is $0$-weakly adaptive.

\item Level Set Projection (LSP) algorithm \cite{CONESVaze}:
The LSP algorithm was defined in \cite{CONESVaze} for the case
$f_t=f$ for all $t$. Its natural extension to changing losses is as
follows. Define
\[
L_t:=\min_{x\in S_t}f_t(x),
\qquad
\Phi_t:=
\left\{
x\in S_t:
f_t(x)\le L_t+\varepsilon
\right\},
\]
and choose some
\[
x_t\in\Phi_t.
\]
For example, one may choose
\[
x_t=\Pi_{\Phi_t}(x_{t-1}).
\]
By construction,
\[
f_t(x_t)
\le
\min_{x\in S_t}f_t(x)+\varepsilon.
\]

\begin{lemma}
The changing-loss version of \textsc{LSP} is
$T\varepsilon$-weakly adaptive.
\end{lemma}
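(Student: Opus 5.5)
The plan mirrors the \textsc{Greedy} argument just given, with an additive slack of $\varepsilon$ per round. I fix an arbitrary interval $I=[a,b]\subseteq[T]$ and the benchmark $x_b^\opt\in\arg\min_{x\in S_b}\sum_{t=1}^b f_t(x)$. As in the previous examples, nestedness gives $S_b\subseteq S_t$ for all $t\le b$, so $x_b^\opt\in S_t$ for every $t\in[a,b]$. This is the only structural property of the benchmark the argument uses; its optimality for the cumulative loss plays no role.

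Next I bound each round. For each $t\in[a,b]$, feasibility of $x_b^\opt$ in $S_t$ gives $L_t=\min_{x\in S_t}f_t(x)\le f_t(x_b^\opt)$. Since $x_t\in\Phi_t$ by construction, $f_t(x_t)\le L_t+\varepsilon\le f_t(x_b^\opt)+\varepsilon$, so each summand satisfies $f_t(x_t)-f_t(x_b^\opt)\le\varepsilon$.

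Summing over $t=a,\ldots,b$ gives
\[
\sum_{t=a}^b\bigl(f_t(x_t)-f_t(x_b^\opt)\bigr)\le (b-a+1)\varepsilon\le T\varepsilon .
\]
Because the interval was arbitrary, the changing-loss \textsc{LSP} is $T\varepsilon$-weakly adaptive.

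I expect no real obstacle. The only point needing care is well-definedness: $\Phi_t$ must be nonempty so that $x_t$ exists. This holds because $S_t$ is a nonempty compact convex set (it contains $S_T\neq\emptyset$) and $f_t$ is continuous, so the minimum $L_t$ is attained and the minimizer lies in $\Phi_t$. The argument does not depend on which point of $\Phi_t$ is chosen, so it covers the projection choice $x_t=\Pi_{\Phi_t}(x_{t-1})$ as well as any other selection rule.
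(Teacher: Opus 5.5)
Your proof is correct and matches the paper's argument: nestedness puts $x_b^\opt$ in every $S_t$ with $t\le b$, so $f_t(x_t)\le L_t+\varepsilon\le f_t(x_b^\opt)+\varepsilon$, and summing over the interval gives $|I|\varepsilon\le T\varepsilon$. Your remark on the nonemptiness of $\Phi_t$ is a harmless addition; it tacitly uses closedness of $S_t$, which the paper also assumes when it writes $\min$, and with $\varepsilon>0$ an infimum would suffice anyway.
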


\begin{proof}
Consider any interval
\[
I=[a,b]\subseteq[T].
\]
By definition,
\[
x_b^\opt
\in
\arg\min_{x\in S_b}\sum_{t=1}^b f_t(x).
\]
Since the feasible sets are nested, 
$S_b\subseteq S_t \forall t\le b,$
and hence $
x_b^\opt\in S_t \forall t\in[a,b].$

By construction of $x_t$,
\[
f_t(x_t)
\le
\min_{x\in S_t}f_t(x)+\varepsilon
\le
f_t(x_b^\opt)+\varepsilon.
\]
Thus,
\[
f_t(x_t)-f_t(x_b^\opt)
\le\varepsilon
\qquad\forall t\in[a,b].
\]
Summing over the interval gives
\[
\sum_{t=a}^b
\bigl(f_t(x_t)-f_t(x_b^\opt)\bigr)
\le
|I|\varepsilon
\le
T\varepsilon.
\]
Hence the changing-loss version of \textsc{LSP} is
$T\varepsilon$-weakly adaptive.
\end{proof}

\end{enumerate}


\section{Conclusion}

In this paper, we generalized the important optimization paradigm of Convex
Optimization with Nested Evolving Feasible Sets (CONES), introduced in
\cite{CONESVaze}, from a fixed objective function to a setting in which both
the loss functions and the feasible sets may evolve over time, while the
benchmark remains a static hindsight optimum. This extension allows us to
separate the effects of evolving optimization geometry from those arising
from temporal variation in the objective itself.

For the strongly convex case, changing loss functions increases the
movement required under $o(T)$ anytime regret from the previously known
$O(\sqrt{\log T})$ upper bound in the fixed-objective setting to the
tight $\Theta(\log T)$ rate established here. For the convex case, we 
establish the same upper bound in the evolving objective setting as already known for the fixed-objective setting, while developing 
new lower bounds that apply even for the fixed-objective setting. With these improvements we have widened the scope of CONES and 
developed a better theoretical understanding.
\bibliographystyle{plain}
\newpage
\bibliography{oco,references}
\section{Proof of Theorem \ref{thm:ubstrcvx}}

\begin{proof}
Let
\[
g_t\in\partial f_t(x_t),
\]
where $\partial f_t(x_t)$ denotes the sub-gradient of $f_t$ at $x_t$.
Since $f_t$ is $\mu$-strongly convex,
\begin{equation}\label{eq:strcvx1}
f_t(x_t)-f_t(x^\opt)
\le
\langle g_t,x_t-x^\opt\rangle
-\frac{\mu}{2}\|x_t-x^\opt\|^2.
\end{equation}

The first-order optimality condition for algorithm \textrm{Prox} \eqref{defn:algprox} implies
\[
\left\langle
g_t+\frac{x_t-x_{t-1}}{\eta_t},
x^\opt-x_t
\right\rangle\ge0,
\]
since $x^\opt \in S_t$ for all $t$.

Therefore,
\[
\langle g_t,x_t-x^\opt\rangle
\le
\frac{1}{\eta_t}
\langle x_t-x_{t-1},x^\opt-x_t\rangle.
\]

Using
\[
\langle x_t-x_{t-1},x^\opt-x_t\rangle
=
-\langle x_t-x_{t-1},x_t-x^\opt\rangle
\]
and the  identity
\[
2\langle x_t-x_{t-1},x_t-x^\opt\rangle
=
\|x_t-x^\opt\|^2
+
\|x_t-x_{t-1}\|^2
-
\|x_{t-1}-x^\opt\|^2,
\]
we obtain
\begin{equation}\label{eq:proxmanipulation}
\langle g_t,x_t-x^\opt\rangle
\le
\frac{1}{2\eta_t}
\left(
\|x_{t-1}-x^\opt\|^2
-
\|x_t-x^\opt\|^2
-
\|x_t-x_{t-1}\|^2
\right).
\end{equation}

Substituting this into \eqref{eq:strcvx1} and using $\eta_t=1/(\mu t)$,
\[
\begin{aligned}
f_t(x_t)-f_t(x^\opt)
\le \frac{\mu}{2}\Big(
&t\|x_{t-1}-x^\opt\| ^2
-(t+1)\|x_t-x^\opt\| ^2 -t\|x_t-x_{t-1}\| ^2
\Big).
\end{aligned}
\]
Summing over $t$ yields
\[
\mathrm{Regret}_\cA(T)
\le
\frac{\mu}{2}
\left(
\|x_0-x^\opt\| ^2
-(T+1)\|x_T-x^\opt\| ^2
-\sum_{t=1}^T t\|x_t-x_{t-1}\| ^2
\right),
\]
and therefore
\[
\mathrm{Regret}_\cA(T)
\le
\frac{\mu D^2}{2}.
\]

For bounding the movement cost $M_\cA(T)$, we rewrite \textrm{Prox} \eqref{defn:algprox} as follows.

\begin{lemma}\label{lem:algequiv}
$x_t$ defined in \eqref{defn:algprox} is equivalent to 
\begin{equation}\label{defn:projupdate}
x_t = \Pi_{S_t}\bigl(x_{t-1}-\eta_t g_t\bigr).
\end{equation}
 where $g_t\in\partial f_t(x_t)$. 
 \end{lemma}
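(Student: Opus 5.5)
The plan is to compare the first-order optimality condition of the strongly convex program in \eqref{defn:algprox} with the variational characterization of Euclidean projection onto a closed convex set. Denote the objective by $\phi_t(x)=f_t(x)+\frac{1}{2\eta_t}\|x-x_{t-1}\|^2$. Since $f_t$ is convex and the quadratic term is $\frac{1}{\eta_t}$-strongly convex, $\phi_t$ is strongly convex on the compact convex set $S_t$. So the minimizer $x_t$ exists and is unique, and the ``$\in$'' in \eqref{defn:algprox} is really an equality.

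For the forward direction, I will use the optimality condition for minimizing a convex function over a convex set: $x_t$ minimizes $\phi_t$ over $S_t$ if and only if $0\in\partial\phi_t(x_t)+N_{S_t}(x_t)$. Here $N_{S_t}$ is the normal cone. The subdifferential sum rule is valid because the quadratic is differentiable everywhere, which gives
\[
\partial\phi_t(x_t)=\partial f_t(x_t)+\tfrac{1}{\eta_t}(x_t-x_{t-1}).
\]
Hence there exists $g_t\in\partial f_t(x_t)$ such that $\left\langle g_t+\frac{x_t-x_{t-1}}{\eta_t},\,y-x_t\right\rangle\ge0$ for all $y\in S_t$. This is the same inequality already used in the regret argument. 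Multiplying by $\eta_t>0$ and rearranging gives
\[
\left\langle (x_{t-1}-\eta_t g_t)-x_t,\;y-x_t\right\rangle\le 0\quad\forall y\in S_t,
\]
together with $x_t\in S_t$. This is exactly the obtuse-angle characterization of $\Pi_{S_t}(x_{t-1}-\eta_t g_t)$, so \eqref{defn:projupdate} holds for this particular subgradient $g_t$.

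For the converse, suppose $x_t=\Pi_{S_t}(x_{t-1}-\eta_t g_t)$ with some $g_t\in\partial f_t(x_t)$. Then the projection inequality reverses the chain of steps above and gives $0\in\partial\phi_t(x_t)+N_{S_t}(x_t)$. By convexity of $\phi_t$ this condition is sufficient for optimality, so $x_t$ is the proximal minimizer.

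The main subtlety is the implicit nature of $g_t$. It is evaluated at $x_t$ itself, and the statement only asserts that \emph{some} subgradient works, not every one. I must make sure the subdifferential sum rule and the normal-cone optimality condition apply without a constraint qualification. This holds because $f_t$ is finite and convex on a neighborhood of $\cX$ (being $G$-Lipschitz), so $\partial f_t(x_t)$ is nonempty. The quadratic term is smooth, and the indicator of $S_t$ is handled by the standard Moreau--Rockafellar rule, since $f_t$ is continuous at points of $S_t$.
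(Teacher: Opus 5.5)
Your proposal is correct and follows essentially the same route as the paper. Both arguments write the optimality condition $0\in\partial\phi_t(x_t)+N_{S_t}(x_t)$, rescale by $\eta_t$ to get $(x_{t-1}-\eta_t g_t)-x_t\in N_{S_t}(x_t)$, and then apply the normal-cone (obtuse-angle) characterization of $\Pi_{S_t}$; the paper proves that characterization from scratch. Your version is somewhat more careful than the paper's in three places: you make explicit that only \emph{some} $g_t\in\partial f_t(x_t)$ works, whereas the paper treats $\partial\phi_t(x_t)$ as a single vector; you address the constraint qualification, though strictly this needs the convex $G$-Lipschitz extension $\inf_{y\in\cX}\{f_t(y)+G\|x-y\|\}$, since Lipschitzness on $\cX$ alone does not define $f_t$ on a neighborhood; and you add the converse implication, which the paper does not prove.
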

Note that \eqref{defn:projupdate} is a fixed point equation where $x_t$ appears on both sides of the equality since $g_t\in\partial f_t(x_t)$. This fixed point property is only a structural result and not used by the \textrm{Prox} algorithm. 
Moreover, importantly $\eta_t$ appears in the numerator \eqref{defn:projupdate} compared to the denominator \eqref{defn:algprox}, which allows a {\it win-win} on two fronts given that $\eta_t=\frac{1}{\mu t}$: $T$ independent regret bound and $O(\log T)$ movement cost.

Proof of Lemma \ref{lem:algequiv} is elementary and provided later for completeness.

Thus, the update  \eqref{defn:algprox} of algorithm \textrm{Prox} is of the form
\begin{equation}\label{defn:updaterewrite}
x_t
=
\Pi_{S_t}\bigl(x_{t-1}-e_t\bigr),
\qquad \text{where} \ 
e_t:=\eta_t g_t,
\end{equation}
and 
\[
\|e_t\| \le\frac{G}{\mu t},
\]
since $\eta_t=1/(\mu t)$ and $||g_t|| \le G$.

For a nested projected-gradient trajectory \eqref{defn:updaterewrite} we invoke the following result. 
\begin{lemma}[Nested-projection movement bound
{\cite[ Lemma~4.1]{SinhaGeom}}]
\label{lem:totalpertprojlength}
Let
\[
K_0\supseteq K_1\supseteq\cdots\supseteq K_T
\]
be a nested sequence of nonempty closed convex subsets of $\mathbb R^d$
such that
\[
\operatorname{diam}(K_0)\le D.
\]
Let $z_0\in K_0$, and suppose that
\[
z_t
=
\Pi_{K_t}(z_{t-1}-e_t),
\qquad
t=1,\ldots,T,
\]
where $e_t\in\mathbb R^d$ is arbitrary. Then
\[
\boxed{
\sum_{t=1}^T\|z_t-z_{t-1}\|
\le
C_d
\left(
D+\sum_{t=1}^T\|e_t\|
\right),
}
\]
where $C_d>0$  is a constant depending only on $||.||$ and the dimension $d$.
\end{lemma}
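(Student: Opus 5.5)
The plan is to reduce Lemma~\ref{lem:totalpertprojlength} to a potential-function argument in the style of Manselli--Pucci for self-contracted curves, the theory whose sharp constants are in \cite{stepanov2017self}. Write $w_t:=z_{t-1}-e_t$ and $p_t:=w_t-z_t$, so that $z_t=\Pi_{K_t}(w_t)$ and
\[
z_t-z_{t-1}=-p_t-e_t,
\qquad
\sum_{t=1}^T\|z_t-z_{t-1}\|\le\sum_{t=1}^T\|p_t\|+\sum_{t=1}^T\|e_t\|.
\]
It therefore suffices to show
\[
\sum_t\|p_t\|\le C_d\Bigl(D+\sum_t\|e_t\|\Bigr).
\]
The only structural fact we use is the variational inequality for projections:
\[
\langle p_t,\,y-z_t\rangle\le0\qquad\text{for all }y\in K_t.
\]
By nestedness this holds in particular for every later iterate $z_s$, $s\ge t$.

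\textbf{Step 1 (approximate self-contraction).} From nonexpansiveness of $\Pi_{K_t}$ toward points of $K_t$, for $t\le s\le u$ we get
\[
\|z_s-z_u\|\le\|z_t-z_u\|+\sum_{r=t+1}^{s}\|e_r\|.
\]
With $e\equiv0$ this is exactly a self-contracted polygonal curve, whose length is at most $C_d\,\mathrm{diam}\le C_d D$ by \cite{stepanov2017self}. That settles the unperturbed case and fixes the form of the constant.

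\textbf{Step 2 (potential).} For the perturbed case I would use the mean width $W(\cdot)$ of $K_t$ as a potential. It is nonincreasing, and $W(K_0)\le c\,D$ since $\operatorname{diam}(K_0)\le D$. The claim to prove is a per-step inequality
\[
W(K_{t-1})-W(K_t)\ \ge\ c_d\|p_t\|-C'_d\|e_t\|-(\text{a term summable to }O(D)).
\]
The reasoning is as follows. Since $z_{t-1}\in K_{t-1}$ lies within $\|e_t\|$ of $w_t$, the point $z_{t-1}$ sits at height at least $\|p_t\|-\|e_t\|$ above the supporting hyperplane of $K_t$ at $z_t$ with normal $u_t=p_t/\|p_t\|$. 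Hence $K_{t-1}\supseteq\operatorname{conv}(K_t\cup\{z_{t-1}\})$ has strictly larger support function in a cone of directions around $u_t$. Integrating over that cone gives a mean-width gain proportional to $\|p_t\|-\|e_t\|$. Telescoping then yields $\sum_t\|p_t\|\le C_d(D+\sum_t\|e_t\|)$.

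\textbf{Main obstacle.} The obstacle is Step 2 for directions $v$ that are close to, but not equal to, $u_t$. There $h_{K_t}(v)$ may exceed $\langle v,z_t\rangle$ by roughly $\angle(v,u_t)\cdot\operatorname{diam}(K_t)$, which swamps the gain when $\|p_t\|\ll D$. I would handle this as in the self-contracted-curve literature. Fix a finite family of cones $\{\Gamma_j\}_{j\le N(d)}$ of small aperture covering the sphere, and assign each step to the cone containing $u_t$. Within one cone, instead of the mean width of $K_t$, use the width of the convex hull of the \emph{future} trajectory $\{z_s\}_{s\ge t}$ (or of its $e$-corrected version), projected onto the cone axis. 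The variational inequality says all future iterates lie in the halfspace $\{\langle u_t,\cdot-z_t\rangle\le0\}$, so this projected extent drops by about $\|p_t\|\cos(\text{aperture})$ up to $O(\|e_t\|)$ per step. Summing over the $N(d)$ cones gives the constant $C_d$.

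If this direct route becomes messy, the fallback is to invoke the Step~1 approximate self-contraction inequality together with a robust version of the Stepanov--Teplitskaya length bound for curves that are self-contracted up to an additive error budget $\sum\|e_r\|$. That robust version is exactly what \cite[Lemma~4.1]{SinhaGeom} provides.
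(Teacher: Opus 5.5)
\textbf{Where this leaves the proposal.} The paper gives no proof of this lemma; it imports it verbatim from \cite[Lemma~4.1]{SinhaGeom}, with the constant traced to \cite{stepanov2017self}. Your fallback is therefore exactly the paper's treatment. Your preliminary steps are also sound: the reduction $z_t-z_{t-1}=-p_t-e_t$, the variational inequality, Step~1 (nonexpansiveness toward $z_u\in K_u\subseteq K_r$), and the unperturbed case via the self-contracted-curve length bound. But the direct route for $e_t\neq0$ does not close, and the cone repair does not remove the obstacle you identified.

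\textbf{Why the cone repair fails.} Suppose $u_t\in\Gamma_j$ makes angle at most $\alpha$ with the axis $a_j$. The variational inequality controls only $\langle u_t,z_s-z_t\rangle\le0$. Tilting to $a_j$ costs $\|a_j-u_t\|\,\|z_s-z_t\|\le\alpha\|z_s-z_t\|$, so what you actually get is
\[
\max_{s\ge t-1}\langle a_j,z_s\rangle-\max_{s\ge t}\langle a_j,z_s\rangle\;\ge\;\|p_t\|\cos\alpha-\|e_t\|-\alpha\max_{s\ge t}\|z_s-z_t\|.
\]
The error here is of order $\alpha D$ per step, not $O(\|e_t\|)$, and this is not just a loose estimate.

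\emph{Example with $e\equiv0$.} Take $K_t=[\ell_t,D]\times[0,m_t]$ with $\ell_t$ nondecreasing and $m_t$ nonincreasing. Then $z_t=(\ell_t,m_t)$ traces a staircase alternating vertical drops of size $h$ (where $u_t=(0,1)$) and horizontal moves of size $\delta$; choose $\delta\sin\alpha>h\cos\alpha$. Consider the cone of half-aperture $\alpha$ around $a_j=(\sin\alpha,\cos\alpha)$, which contains $u_t=(0,1)$. At every vertical step except possibly the last:
\begin{itemize}
\item the next iterate already has a larger $a_j$-coordinate than $z_{t-1}$, so the max of the projected future hull does not change;
\item $z_t$ lies below $z_{t-1}$ in that coordinate, so the min does not change either.
\end{itemize}
Your potential therefore drops by $0$, while $\|p_t\|\cos\alpha=h\cos\alpha>0$. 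The per-step inequality your telescoping rests on is false even when $e\equiv0$. Nothing in the proposal controls the sum of the $\alpha D$ errors over the steps assigned to $\Gamma_j$, whose number is arbitrary.

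Defeating exactly this lateral-spread effect is the nontrivial content of the self-contracted-curve theorems (Manselli--Pucci, Daniilidis et al., \cite{stepanov2017self}). Those arguments exploit the self-contraction of the whole future trajectory, not a single supporting half-space per step. For $e_t\neq0$ one additionally needs a robust version of them, which \cite{stepanov2017self} does not supply.

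\textbf{The fallback.} As quoted, \cite[Lemma~4.1]{SinhaGeom} is the statement itself, about projected iterates on nested sets. It is not an abstract length bound for sequences that are self-contracted up to an additive error budget. So your fallback does not complete Step~1; it simply cites the result, which is all the paper does too. As a self-contained proof, the proposal is missing its key step.
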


In our setting $S_t = K_t$ and $S_t \subseteq S_{t-1}$, thus, using Lemma \ref{lem:totalpertprojlength}, we get from \eqref{defn:updaterewrite}, that for algorithm \textrm{Prox},
\[
\sum_{t=1}^T\|x_t-x_{t-1}\| 
\le
C_d\left(
D+\sum_{t=1}^T\|e_t\| 
\right),
\]
with $\|e_t\| \le\frac{G}{\mu t}$.
Hence
\[
M_\cA(T)
\le
C_d\left(
D+\frac{G}{\mu}\sum_{t=1}^T\frac1t
\right)
\le
C_d\left(
D+\frac{G}{\mu}(1+\log T)
\right).
\]
\end{proof}

 \begin{proof}[Proof of Lemma \ref{lem:algequiv}]
Let $y_t:=x_{t-1}-\eta_t g_t$.
From \eqref{defn:algprox}, $x_t$ minimizes the function
\[
\phi_t(x):=
f_t(x)+\frac{1}{2\eta_t}\|x-x_{t-1}\| ^2
\]
over the closed convex set $S_t$. Thus, the first-order optimality condition implies
\[
0\in \partial \phi_t(x_t)+N_{S_t}(x_t),
\]
where $N_S(x)$ is the normal cone to $S$ at point $x\in S$.
Because
\[
\partial \phi_t(x_t)
=
g_t+\frac{1}{\eta_t}(x_t-x_{t-1}),
\]
we obtain
\[
0\in
g_t+\frac{1}{\eta_t}(x_t-x_{t-1})
+N_{S_t}(x_t).
\]

Thus there exists some
\[
n_t\in N_{S_t}(x_t)
\]
such that
\[
g_t+\frac{1}{\eta_t}(x_t-x_{t-1})+n_t=0.
\]
Multiplying by $\eta_t$ gives
\[
\eta_t g_t+x_t-x_{t-1}+\eta_t n_t=0,
\]
and hence
\[
x_{t-1}-\eta_t g_t-x_t
=
\eta_t n_t.
\]
By the definition of $y_t$,
\[
y_t-x_t=\eta_t n_t.
\]
Since the normal cone is a cone, $n_t\in N_{S_t}(x_t)$ implies
\[
\eta_t n_t\in N_{S_t}(x_t).
\]
Therefore
\begin{equation}
\label{eq:dummy1}
y_t-x_t\in N_{S_t}(x_t).
\end{equation}

We next recall the well-known projection characterization, whose proof is included for completeness. Let $S$ be a nonempty closed
convex set. We claim that, for $x\in S$,
\begin{equation}\label{eq:dummy2}
x=\Pi_S(y)
\iff
y-x\in N_S(x).
\end{equation}

First suppose that $x=\Pi_S(y)$. By definition,
\[
\|y-x\| ^2
\le
\|y-z\| ^2
\qquad
\forall z\in S.
\]
For any $z\in S$ and any $\lambda\in(0,1]$, convexity gives
\[
x+\lambda(z-x)\in S.
\]
Hence
\[
\|y-x\| ^2
\le
\|y-x-\lambda(z-x)\| ^2.
\]
Expanding,
\[
0
\le
-2\lambda\langle y-x,z-x\rangle
+\lambda^2\|z-x\| ^2.
\]
Dividing by $\lambda$ and letting $\lambda\downarrow0$ gives
\[
\langle y-x,z-x\rangle\le0
\qquad
\forall z\in S.
\]
By the definition of the normal cone,
\[
y-x\in N_S(x).
\]

Conversely, suppose that $x\in S$ and
\[
y-x\in N_S(x).
\]
Then, for every $z\in S$,
\[
\langle y-x,z-x\rangle\le0.
\]
Therefore
\[
\begin{aligned}
\|y-z\| ^2
&=
\|y-x-(z-x)\| ^2\\
&=
\|y-x\| ^2
+\|z-x\| ^2
-2\langle y-x,z-x\rangle\\
&\ge
\|y-x\| ^2.
\end{aligned}
\]
Thus $x$ minimizes $\|y-z\| $ over $z\in S$, so
\[
x=\Pi_S(y).
\]
This proves \eqref{eq:dummy2}.

Finally, $x_t\in S_t$ by construction, and \eqref{eq:dummy1} gives
\[
y_t-x_t\in N_{S_t}(x_t).
\]
Applying \eqref{eq:dummy2} with $S=S_t$, $y=y_t$, and $x=x_t$ yields
\[
x_t=\Pi_{S_t}(y_t)
=
\Pi_{S_t}\bigl(x_{t-1}-\eta_t g_t\bigr).
\]
\end{proof}

\section{Proof of Theorem \ref{thm:lbstrcvx}}
\begin{proof}
Let the mother set be
\[
\mathcal X=[-1/2,1/2],
\]
and let
\[
S_t=\mathcal X
\qquad\forall t, i.e.,
\]
{\bf all sets $S_t$'s are identical and hence nested}.

Let
\[
N_k=33^k,\qquad k\ge0,
\]
so that $N_0=1$. Define the first phase by
\[
\mathcal I_1=\{1,\ldots,N_1\},
\]
and, for every $k\ge2$, define
\[
\mathcal I_k
=
\{N_{k-1}+1,\ldots,N_k\}.
\]
Thus, for every $k\ge2$, if
\[
N=N_{k-1},
\]
then phase $k$ consists of exactly
\[
N_k-N_{k-1}=32N
\]
rounds.

Fix
\[
s_1=\frac12
\]
in advance, and for every $t\in\mathcal I_1$ define
\[
f_t(x)=\frac12(x-s_1)^2.
\]

At the beginning of phase $k\ge2$, the phase starts at round
$N_{k-1}+1$. Recall that, in CONES, the action $x_t$ is chosen after $f_t$ and
$S_t$ are revealed. Hence, at the beginning of phase $k$, the adversary
can observe $x_{N_{k-1}}$ before choosing $s_k$. 

After observing action $x_{N_{k-1}}$ of $\cA$, the adversary chooses
\[
s_k\in\{-1/2,1/2\}
\]
such that
\[
||x_{N_{k-1}}-s_k||\ge\frac12,
\]
and then for every $t\in\mathcal I_k$, uses
\[
f_t(x)=\frac12(x-s_k)^2,
\qquad
t=N_{k-1}+1,\ldots,N_k.
\]

Since $x,s_k\in[-1/2,1/2]$,
\[
||f_t'(x)||=||x-s_k||\le1,
\]
so every $f_t$ is $1$-Lipschitz. Moreover, every $f_t$ is
$1$-strongly convex.

Next, we show that for any $\cA$ with $o(t)$ {\it anytime} regret, for all sufficiently large $N$, any phase of length
$32N$ must incur at least $1/4$ movement. Fix a phase $k\ge2$ and write
\[
N=N_{k-1}.
\]
Suppose, for contradiction, that the total movement in phase $k$
\[
\sum_{t=N+1}^{N+32N}||x_t-x_{t-1}||
<
\frac14.
\]
Then, for every
\[
t\in\{N+1,\ldots,N+32N\},
\]
\[
||x_t-x_N||
\le
\sum_{r=N+1}^{t}||x_r-x_{r-1}||
<
\frac14.
\]
By the choice of $s_k$,
\[
||x_t-s_k||
\ge
||x_N-s_k||-||x_t-x_N||
>
\frac12-\frac14
=
\frac14.
\]
Hence
\begin{equation}\label{eq:phase-action-loss}
f_t(x_t)
=
\frac12(x_t-s_k)^2
>
\frac1{32}
\end{equation}
for every
\[
t=N+1,\ldots,N+32N.
\]

Consider the prefix $[1,\tau]$ where
\[
\tau=N+32N=33N.
\]
Let
\[
x_\tau^\opt
\in
\arg\min_{x\in\mathcal X}
\sum_{r=1}^{\tau}f_r(x)
\]
be the static benchmark for this prefix, since $S_t=\cX$ for all $t$.

Every loss has the form
\[
f_r(x)=\frac12(x-s_r)^2,
\qquad
s_r\in\{-1/2,1/2\}.
\]
Hence
\[
\sum_{r=1}^{\tau}f_r(x)
=
\frac12\sum_{r=1}^{\tau}(x-s_r)^2,
\]
whose minimizer is
\[
x_\tau^\opt
=
\frac1{\tau}\sum_{r=1}^{\tau}s_r.
\]
Over the first $N$ rounds, the offsets $s_r$ are those prescribed by phases
$1,\ldots,k-1$, while over the remaining $32N$ rounds, corresponding to
the current phase $k$, the offset is $s_k$. Therefore,
\[
\begin{aligned}
x_\tau^\opt-s_k
&=
\frac1{33N}
\left(
\sum_{r=1}^{N}s_r
+
32N s_k
\right)
-s_k\\
&=
\frac1{33N}
\left(
\sum_{r=1}^{N}s_r
+
32N s_k
-
33N s_k
\right)\\
&=
\frac1{33N}
\sum_{r=1}^{N}(s_r-s_k).
\end{aligned}
\]
Since
\[
||s_r-s_k||\le1,
\]
we obtain
\[
||x_\tau^\opt-s_k||
\le
\frac{N}{33N}
=
\frac1{33}
<
\frac18.
\]
Consequently, for every
\[
t=N+1,\ldots,N+32N,
\]
\[
f_t(x_\tau^\opt)
=
\frac12(x_\tau^\opt-s_k)^2
<
\frac1{128}.
\]
Combining this with \eqref{eq:phase-action-loss},
\begin{equation}\label{eq:phase-regret-per-round}
f_t(x_t)-f_t(x_\tau^\opt)
>
\frac1{32}-\frac1{128}
=
\frac3{128}.
\end{equation}

For the preceding $N$ rounds,
\[
r=1,\ldots,N,
\]
each loss satisfies
\[
f_r(x_r)\ge0
\]
and, since $x_\tau^\opt,s_r\in[-1/2,1/2]$,
\[
f_r(x_\tau^\opt)
=
\frac12(x_\tau^\opt-s_r)^2
\le\frac12.
\]
Thus
\begin{equation}\label{eq:phase-regret-first}
f_r(x_r)-f_r(x_\tau^\opt)
\ge-\frac12,
\qquad
r=1,\ldots,N.
\end{equation}

Splitting the prefix regret into these two parts and using
\eqref{eq:phase-regret-first} and \eqref{eq:phase-regret-per-round},
\begin{align}
R_{\mathcal A}(\tau)
&=
\sum_{r=1}^{N}
\bigl(f_r(x_r)-f_r(x_\tau^\opt)\bigr)
+
\sum_{r=N+1}^{N+32N}
\bigl(f_r(x_r)-f_r(x_\tau^\opt)\bigr)
\nonumber\\
&>
-\frac N2
+
32N\frac3{128}
=
\frac N4.
\label{eq:phase-regret-lb}
\end{align}

By the assumed prefix-regret guarantee,
\[
R_{\mathcal A}(\tau)
\le
R(\tau)
=
R(33N).
\]
Since
\[
R(t)=o(t),
\]
we have
\[
\frac{R(33N)}{N}
=
33\,\frac{R(33N)}{33N}
\longrightarrow 0
\qquad\text{as }N\to\infty.
\]
Hence, there exists $N_0$ such that, for every $N\ge N_0$,
\[
R(33N)<\frac N4.
\]
But from \eqref{eq:phase-regret-lb}, we have that if the total movement during the phase
is less than $1/4$, then
\[
R_{\mathcal A}(33N)>\frac N4.
\]
Therefore, for every phase with
\[
N=N_{k-1}\ge N_0,
\]
we must have
\[
\sum_{t=N+1}^{N+32N}
\|x_t-x_{t-1}\|
\ge \frac14.
\]

Since
\[
N_{k-1}=33^{k-1},
\]
only finitely many phases satisfy
\[
N_{k-1}<N_0.
\]
In particular, the number of such phases is at most
\[
1+\left\lceil \log_{33}N_0\right\rceil,
\]
which is independent of $T$. Hence all but finitely many complete phases contribute at least
$1/4$ movement, where the number of exceptional phases is independent
of $T$.

Finally, the number of complete phases contained in $[T]$ is
\[
K=\left\lfloor\log_{33}T\right\rfloor.
\]
Therefore, denoting $O(1)$ as a constant for the finitely many phase where we do not have a lower bound on the movement cot, 
\[
M_{\mathcal A}(T)
\ge
\frac14K-O(1)
=
\Omega(\log T).
\]

%
\end{proof}

\section{Proof of Theorem \ref{thm:ubcvx}}
\begin{proof}
Let $g_t\in\partial f_t(x_t)$. Since $x^\opt\in S_T\subseteq S_t$ and
$f_t$ is convex,
\begin{equation}\label{eq:convexcond}
f_t(x_t)-f_t(x^\opt)
\le
\langle g_t,x_t-x^\opt\rangle.
\end{equation}

The first-order optimality condition for algorithm \textrm{Prox} \eqref{defn:algprox} implies
\[
\left\langle
g_t+\frac{x_t-x_{t-1}}{\eta},
x^\opt-x_t
\right\rangle\ge0,
\]
since $x^\opt\in S_t$ for all $t$. 
Similar to \eqref{eq:proxmanipulation}, we have 
\[
\langle g_t,x_t-x^\opt\rangle
\le
\frac{
\|x_{t-1}-x^\opt\|^2
-
\|x_t-x^\opt\|^2
-
\|x_t-x_{t-1}\|^2
}{2\eta}.
\]
Therefore, from \eqref{eq:convexcond}, we get
\[
f_t(x_t)-f_t(x^\opt)
\le
\frac{
\|x_{t-1}-x^\opt\|^2
-
\|x_t-x^\opt\|^2
-
\|x_t-x_{t-1}\|^2
}{2\eta}.
\]
Dropping the nonpositive last term and summing over $t$ gives
\[
\mathrm{Regret}_\cA(T)
\le
\frac{
\|x_0-x^\opt\|^2
-
\|x_T-x^\opt\|^2
}{2\eta}
\le
\frac{D^2}{2\eta}
=
\frac{\epsilon T}{2}.
\]

Using Lemma \ref{lem:algequiv}, we know that the algorithm \textrm{Prox}'s update \eqref{defn:algprox} is of the form
\[
x_t=\Pi_{S_t}(x_{t-1}-e_t),
\qquad
e_t:=\eta g_t.
\]
Since $f_t$ is $G$-Lipschitz,
\[
\|e_t\|\le \eta G.
\]
Since $S_t$'s are nested, applying Lemma \ref{lem:totalpertprojlength}, we get
\[
M_\cA(T)
\le
C_d\left(
D+\sum_{t=1}^T\|e_t\|
\right),
\]
and therefore
\[
M_\cA(T)
\le
C_d(D+\eta GT)
=
C_d\left(
D+\frac{GD^2}{\epsilon}
\right).
\]
\end{proof}
\section{Proof of Theorem \ref{lemma:lbgenConvex}}\label{app:lemma:lbgenConvex}
\label{app:lemma:lbgenConvex}

\begin{proof}
Let
\[
\mathcal X=[-1,1]\times[0,1], \]
and the fixed loss function be 

\[
f(x,y)=-y.
\]
Thus, $d=2$, $f$ is linear and hence convex, and
\[
\|\nabla f(x,y)\|=1,
\]
so $f$ is $1$-Lipschitz on $\mathcal X$.

Let
\[
\operatorname{conv}(A)
\]
denote the convex hull of a set $A$, i.e., the smallest convex set
containing $A$.

Let
\[
R:=\mathrm{Regret}_{\cA}(T).
\]
Define
\[
\delta := 2\sqrt{\frac{R+1}{T}},
\qquad
K := \left\lfloor \frac{1}{2\delta}\right\rfloor .
\]

The assumption
\[
T\ge16\bigl(R+1\bigr)
\]
gives
\[
\delta\le\frac12.
\]
Hence
\[
\frac{1}{2\delta}\ge1,
\]
and therefore
\[
K
=
\left\lfloor\frac{1}{2\delta}\right\rfloor
\ge
\frac{1}{4\delta}.
\]

For $k=1,\ldots,K$, define
\[
q_k:=1-(k-1)\delta,
\qquad
s_k:=(-1)^k,
\qquad
z_k:=(s_k,q_k),
\]
and let
\[
c:=(0,1-K\delta).
\]
Since $K\delta\le1/2$, we have
\[
q_k\ge q_K=1-(K-1)\delta>\frac12,
\]
and
\[
\frac12\le1-K\delta\le1.
\]
Thus,
\[
z_1,\ldots,z_K,c\in\mathcal X.
\]

For $k=1,\ldots,K$, define
\[
S_k
:=
\operatorname{conv}\{c,z_k,z_{k+1},\ldots,z_K\}.
\]
Then
\[
S_1\supseteq S_2\supseteq\cdots\supseteq S_K.
\]
Moreover,
\[
q_k>q_{k+1}>\cdots>q_K>1-K\delta.
\]
Hence, the unique minimizer of $f(x,y)=-y$ over $S_k$ is
$z_k$. Thus
\[
v_k:=\min_{x\in S_k}f(x)=f(z_k)=-q_k.
\]

For $k=1,\ldots,K$, define the {\it trigger} set
\[
A_k
:=
\left\{
(x,y)\in S_k:
s_kx\ge\frac12
\right\}.
\]

See Fig. \ref{fig:lb} for an illustration of this input.

\begin{figure}[t]
\centering
\begin{tikzpicture}[
    x=5.0cm,
    y=5.0cm,
    line join=round,
    line cap=round
]

\coordinate (z1) at (-1,0.88);
\coordinate (z2) at ( 1,0.76);
\coordinate (z3) at (-1,0.61);
\coordinate (z4) at ( 1,0.46);
\coordinate (z5) at (-1,0.31);
\coordinate (c)  at ( 0,0.12);

\fill[red!12] (-1,0) rectangle (-0.5,1);
\fill[blue!12] (0.5,0) rectangle (1,1);

\fill[blue!22,opacity=.55]
    (z1) -- (z2) -- (z4) -- (c) -- (z5) -- cycle;

\draw[blue!75!black,thick]
    (z1) -- (z2) -- (z4) -- (c) -- (z5) -- cycle;

\draw[blue!75!black,thick]
    (z1) -- (z5);

\fill[green!25,opacity=.50]
    (z3) -- (z2) -- (z4) -- (c) -- (z5) -- cycle;

\draw[green!50!black,thick]
    (z3) -- (z2) -- (z4) -- (c) -- (z5) -- cycle;

\fill[orange!30,opacity=.50]
    (z3) -- (z4) -- (c) -- (z5) -- cycle;

\draw[orange!75!black,thick]
    (z3) -- (z4) -- (c) -- (z5) -- cycle;

\fill[red!25,opacity=.50]
    (z5) -- (z4) -- (c) -- cycle;

\draw[red!65!black,thick]
    (z5) -- (z4) -- (c) -- cycle;

\draw[purple!75!black,very thick]
    (z5) -- (c);

\draw[dashed,black!70]
    (-0.5,0) -- (-0.5,1);

\draw[dashed,black!70]
    (0.5,0) -- (0.5,1);

\draw[->,thick]
    (-1.15,0) -- (1.18,0)
    node[right] {$x$};

\draw[->,thick]
    (0,0) -- (0,1.05)
    node[above] {$y$};

\node[below] at (-1,0) {$-1$};
\node[below] at (-0.5,0) {$-\frac12$};
\node[below] at (0.5,0) {$\frac12$};
\node[below] at (1,0) {$1$};

\node[red!70!black,align=center]
    at (-0.75,0.18)
    {$x<-\frac12$\\[-1mm]
     \scriptsize left trigger region};

\node[blue!70!black,align=center]
    at (0.75,0.18)
    {$x>\frac12$\\[-1mm]
     \scriptsize right trigger region};

\fill (z1) circle (2pt);
\fill (z2) circle (2pt);
\fill (z3) circle (2pt);
\fill (z4) circle (2pt);
\fill (z5) circle (2pt);
\fill (c)  circle (2pt);

\node[left] at (z1) {$z_1=(-1,q_1)$};
\node[right] at (z2) {$z_2=(1,q_2)$};

\node[left] at (z3) {$z_3=(-1,q_3)$};
\node[right] at (z4) {$z_4=(1,q_4)$};

\node[left] at (z5) {$z_5=(-1,q_5)$};

\node[below right] at (c)
    {$c=(0,1-K\delta)$};

\node[left] at (-1,0.735) {$\vdots$};
\node[right] at (1,0.61) {$\vdots$};

\node at (0,0.98)
    {$q_1>q_2>q_3>q_4>q_5>\cdots>q_K$};

\node[blue!70!black,font=\large] at (-0.30,0.79)
    {$S_1$};

\node[green!45!black,font=\large] at (0.30,0.62)
    {$S_2$};

\node[orange!75!black,font=\large] at (-0.22,0.45)
    {$S_3$};

\node[red!65!black,font=\large] at (0.28,0.29)
    {$S_4$};

\node[
    draw=black!40,
    fill=white,
    align=left,
    anchor=north west,
    inner sep=5pt
] at (0.53,1.53)
{
\begin{tabular}{@{}l@{}}
$\color{blue!70!black}\rule{0.32cm}{0.08cm}$
$S_1=\operatorname{conv}\{c,z_1,\ldots,z_K\}$\\[2mm]
$\color{green!50!black}\rule{0.32cm}{0.08cm}$
$S_2=\operatorname{conv}\{c,z_2,\ldots,z_K\}$\\[2mm]
$\color{orange!75!black}\rule{0.32cm}{0.08cm}$
$S_3=\operatorname{conv}\{c,z_3,\ldots,z_K\}$\\[2mm]
$\color{red!65!black}\rule{0.32cm}{0.08cm}$
$S_4=\operatorname{conv}\{c,z_4,\ldots,z_K\}$\\[2mm]
$\color{purple!75!black}\rule{0.32cm}{0.08cm}$
$S_K=[c,z_K]$
\end{tabular}
};

\end{tikzpicture}
\caption{
Illustration of the nested feasible sets
$S_k=\operatorname{conv}\{c,z_k,\ldots,z_K\}$.
The points alternate between $x=-1$ and $x=1$ with
$q_1>q_2>\cdots>q_K$. In the illustrated $K=5$ case,
$z_3$ and $z_5$ lie on the left boundary of $S_1$, while
$z_2$ and $z_4$ lie on its right boundary. The dashed lines
$x=\pm\frac12$ indicate the regions used by the trigger sets
$A_k=\{(x,y)\in S_k:s_kx\ge\frac12\}$.
}
\label{fig:lb}
\end{figure}

\begin{lemma}\label{lem:lbp}
For every
\[
p=(x,y)\in S_k\setminus A_k,
\]
we have
\[
f(p)-f(z_k)>\frac{\delta}{4}.
\]
\end{lemma}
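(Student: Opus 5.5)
The plan is to write any point of $S_k$ as a convex combination of the vertex $z_k$ and a point of the ``lower'' part of the hull, and then to play the height deficit against the horizontal position. Concretely, let
\[
W_k:=\operatorname{conv}\{c,z_{k+1},\ldots,z_K\},
\]
with $W_K=\{c\}$. Since $S_k=\operatorname{conv}(\{z_k\}\cup W_k)$, every $p=(x,y)\in S_k$ can be written as $p=\lambda z_k+(1-\lambda)w$ for some $\lambda\in[0,1]$ and some $w=(x_w,y_w)\in W_k$. Because $f(x,y)=-y$, we have
\[
f(p)-f(z_k)=q_k-y,
\]
so the task reduces to bounding the height gap $q_k-y$ from below.

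The first step is a height bound on $W_k$. Every generator of $W_k$ has second coordinate at most $q_k-\delta$. Indeed, $q_{j}\le q_{k+1}=q_k-\delta$ for $j>k$. For $c$, we have $1-K\delta=q_K-\delta\le q_k-\delta$. Hence $y_w\le q_k-\delta$, and therefore
\[
y\le \lambda q_k+(1-\lambda)(q_k-\delta)=q_k-(1-\lambda)\delta .
\]
This gives $f(p)-f(z_k)\ge(1-\lambda)\delta$.

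The second step converts the hypothesis $p\notin A_k$, i.e.\ $s_kx<\tfrac12$, into a bound on $\lambda$. Since $s_k^2=1$ and $|x_w|\le1$ (all generators lie in $\mathcal X$, whose first coordinate ranges over $[-1,1]$), we get
\[
s_kx=\lambda s_k^2+(1-\lambda)s_kx_w\ge\lambda-(1-\lambda)=2\lambda-1 .
\]
Combining this with $s_kx<\tfrac12$ yields $2\lambda-1<\tfrac12$, so $\lambda<\tfrac34$ and $1-\lambda>\tfrac14$. Substituting into the first step gives $f(p)-f(z_k)>\delta/4$, as claimed.

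There is no real obstacle here. The only points needing care are the boundary cases. First, the representation $p=\lambda z_k+(1-\lambda)w$ for a hull of the form $\operatorname{conv}(\{z_k\}\cup W_k)$ with $W_k$ convex is standard. Second, the case $k=K$ is covered because $W_K=\{c\}$ and $c$ has height $q_K-\delta$. Third, strictness of the final inequality comes directly from the strict inequality $s_kx<\tfrac12$, so it needs no separate argument.
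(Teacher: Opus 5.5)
Your proof is correct and follows essentially the same idea as the paper's: write $p$ as a convex combination, use $s_kx<\frac12$ to force more than $\frac14$ of the weight onto generators of height at most $q_k-\delta$, and conclude $y<q_k-\delta/4$. The only difference is bookkeeping. You lump every generator other than $z_k$ into $W_k$ and use the crude bound $s_kx_w\ge -1$, whereas the paper counts only $c$ and the opposite-sign vertices $z_j$ ($j>k$, $s_j=-s_k$) and computes $s_kx$ exactly; your version is slightly simpler because it never needs the signs $s_j$ for $j>k$.
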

Proof of Lemma \ref{lem:lbp} is provided at the end.
The purpose of Lemma \ref{lem:lbp} is to show that every action
 outside the trigger set $A_k$ incurs a definite loss relative to the optimal action $z_k$ in $S_k$. Thus, an algorithm cannot remain outside
$A_k$ for too many consecutive rounds: weak adaptivity forces it to
enter $A_k$ after a bounded number of rounds. We will use this to define the input in phases as follows.

Phase $k$ uses the feasible set $S_k$. The first phase starts at
round $1$. More generally, if phase $k$ ends at round $\tau_k$, then
phase $k+1$ begins at round $\tau_k+1$ with feasible set $S_{k+1}$.
Thus, if round $t$ belongs to phase $k$, we set
\[
S_t:=S_k.
\]
Phase $k$ continues until the first round $t$ for which
\[
x_t\in A_k.
\]
That round is the final round of phase $k$.


Let
\begin{equation}\label{defn:L}
L:=
\left\lceil
\frac{4(R+1)}{\delta}
\right\rceil.
\end{equation}

Suppose, for contradiction, that phase $k$ contains $L$ consecutive
rounds without the algorithm $\cA$ choosing an action in $A_k$. Let
$I_k$ denote these $L$ rounds, and {\bf let $b$ be the last round of $I_k$}.
Since the feasible set is $S_k$ throughout this phase,
\[
x_b^\opt
\in
\arg\min_{x\in S_b}\sum_{t=1}^b f(x)
=
\arg\min_{x\in S_k} b\,f(x)
=
\{z_k\},
\]
and hence
\[
x_b^\opt=z_k.
\]

Therefore, the weakly adaptive regret guarantee applied to $I_k$ gives
\[
\sum_{t\in I_k}
\bigl(f(x_t)-f(z_k)\bigr)
\le R.
\]
On the other hand, $x_t\notin A_k$ for every $t\in I_k$, so by
Lemma~\ref{lem:lbp},
\[
\sum_{t\in I_k}
\bigl(f(x_t)-f(z_k)\bigr)
>
\frac{\delta |I_k|}{4}
=
\frac{\delta L}{4}
\ge R+1,
\]
where the last inequality follows from the definition of $L$. This is a contradiction. Hence every phase has length at most $L$.

It remains to verify that all $K$ phases fit within the horizon. 
Since each phase contains at most $L$ rounds,
\[
KL
\le
\frac{1}{2\delta}
\left(
\frac{4(R+1)}{\delta}+1
\right).
\]
Using
\[
\delta^2=\frac{4(R+1)}{T},
\]
we have
\[
\frac{2(R+1)}{\delta^2}
=
\frac{T}{2}.
\]
Moreover,
\[
\frac{1}{2\delta}
\le
\frac{\sqrt{T}}{4}
\le
\frac{T}{4},
\]
where the first inequality follows from
\[
\delta\ge\frac{2}{\sqrt{T}}
\]
and the second from $T\ge1$. Therefore,
\[
KL
\le
\frac{T}{2}+\frac{T}{4}
=
\frac{3T}{4}
<T.
\]
Hence all $K$ phases fit within the horizon; the remaining rounds, if
any, are filled by continuing with the feasible set $S_K$.

Finally, since
\[
s_{k+1}=-s_k,
\]
we have
\[
A_k
\subseteq
\left\{(x,y):s_kx\ge\frac12\right\},
\]
while
\[
A_{k+1}
\subseteq
\left\{(x,y):s_{k+1}x\ge\frac12\right\}
=
\left\{(x,y):s_kx\le-\frac12\right\}.
\]
Thus, 
\[
\operatorname{dist}(A_k,A_{k+1}) = \min_{p_1\in A_k,p_2\in A_{k+1}}||p_1-p_2||\ge1.
\]
Every phase $k$ ends with an action $x_{\tau_k}\in A_k$.
Since the next phase ends with an action
$x_{\tau_{k+1}}\in A_{k+1}$ and
\[
\operatorname{dist}(A_k,A_{k+1})\ge 1,
\]
we have
\[
1
\le
\|x_{\tau_k}-x_{\tau_{k+1}}\|
\le
\sum_{t=\tau_k+1}^{\tau_{k+1}}
\|x_t-x_{t-1}\|.
\]
Thus each pair of consecutive phases contributes at least one unit of
movement, and hence
\[
M_{\cA}(T)\ge K-1.
\]
Using
\[
K\ge\frac1{4\delta},
\]
we obtain
\[
M_\cA(T)
\ge
\frac1{4\delta}-1
=
\frac18
\sqrt{\frac{T}
{R+1}}
-1.
\]
In particular, if
\[
R=o(T),
\]
then
\[
\sqrt{\frac{T}{R+1}}
\longrightarrow\infty.
\]
Hence, for all sufficiently large $T$,
\[
\frac18\sqrt{\frac{T}{R+1}}-1
\ge
\frac1{16}\sqrt{\frac{T}{R+1}},
\]
and therefore
\[
M_\cA(T)
=
\Omega\!\left(
\sqrt{\frac{T}{R+1}}
\right).
\]
\end{proof}

\begin{proof}[Proof of Lemma \ref{lem:lbp}]
Since $p\in S_k$, there exist coefficients
\[
\lambda_0,\lambda_k,\ldots,\lambda_K\ge0,
\qquad
\lambda_0+\sum_{j=k}^K\lambda_j=1,
\]
such that
\[
p
=
\lambda_0c+\sum_{j=k}^K\lambda_jz_j.
\]
Thus, $p$ is a convex combination of $c$ and the points
$z_k,\ldots,z_K$.

Let
\[
w
:=
\lambda_0+
\sum_{\substack{j>k\\s_j=-s_k}}\lambda_j.
\]
This is the total weight assigned to $c$ and to those points
$z_j$ with $j>k$ whose first coordinate has the opposite sign from
$s_k$. These are precisely the terms that reduce the quantity $s_kx$.

Since
\[
p=\lambda_0c+\sum_{j=k}^K\lambda_jz_j
\]
and
\[
c=(0,1-K\delta),
\qquad
z_j=(s_j,q_j),
\]
the first coordinate of $p=(x,y)$ is
\[
x
=
\lambda_0\cdot 0+\sum_{j=k}^K\lambda_j s_j
=
\sum_{j=k}^K\lambda_j s_j.
\]
Multiplying both sides by $s_k\in\{-1,1\}$ gives
\[
s_kx
=
\sum_{j=k}^K\lambda_j s_ks_j.
\]
For $j=k$, we have
\[
s_ks_k=1,
\]
so the corresponding term is $\lambda_k$. For $j>k$, either
$s_j=s_k$, in which case
\[
s_ks_j=1,
\]
or $s_j=-s_k$, in which case
\[
s_ks_j=-1.
\]
Therefore,
\[
\begin{aligned}
s_kx
&=
\lambda_k
+
\sum_{\substack{j>k\\s_j=s_k}}\lambda_j
-
\sum_{\substack{j>k\\s_j=-s_k}}\lambda_j, \\
&=
1-\lambda_0
-
2\sum_{\substack{j>k\\s_j=-s_k}}\lambda_j.
\end{aligned}
\]
Since $p\notin A_k$,
\[
s_kx<\frac12.
\]
Therefore,
\[
\lambda_0
+
2\sum_{\substack{j>k\\s_j=-s_k}}\lambda_j
>
\frac12.
\]
Since
\[
2w
=
2\lambda_0
+
2\sum_{\substack{j>k\\s_j=-s_k}}\lambda_j
\ge
\lambda_0
+
2\sum_{\substack{j>k\\s_j=-s_k}}\lambda_j,
\]
we obtain
\[
w>\frac14.
\]

Every point contributing to $w$ has second coordinate at most
$q_k-\delta$. Indeed, for every $j>k$,
\[
q_j\le q_k-\delta,
\]
while
\[
1-K\delta
\le
q_k-\delta
\]
because $k\le K$. All remaining points in the convex combination
have second coordinate at most $q_k$. Hence
\[
\begin{aligned}
y
&\le
w(q_k-\delta)+(1-w)q_k\\
&=
q_k-w\delta\\
&<
q_k-\frac{\delta}{4}.
\end{aligned}
\]
Since
\[
f(p)-f(z_k)
=
-y+q_k,
\]
we conclude that
\[
f(p)-f(z_k)>\frac{\delta}{4}.
\]
\end{proof}

\end{document}